\def\eqref#1{equation~\ref{#1}}
\def\1{\bm{1}}
\DeclareMathAlphabet{\mathsfit}{\encodingdefault}{\sfdefault}{m}{sl}
\SetMathAlphabet{\mathsfit}{bold}{\encodingdefault}{\sfdefault}{bx}{n}
\DeclareMathOperator*{\argmin}{arg\,min}
\newtheorem{theorem}{Theorem}[section]
\newtheorem{lemma}[theorem]{Lemma}
\newtheorem*{theorem*}{Theorem}
\newtheorem*{lemma*}{Lemma}
\newtheorem*{remark*}{Remark}
\newtheorem{assumption}[theorem]{Assumption}
\newtheorem{definition}[theorem]{Definition}
\newtheorem*{proposition*}{Proposition}
\title{Machine Unlearning of Federated Clusters}
\author{Chao Pan$^*$, Jin Sima$^*$, Saurav Prakash\thanks{Equal contribution.}\; , Vishal Rana \& Olgica Milenkovic \\
Department of Electrical and Computer Engineering\\
University of Illinois Urbana-Champaign, USA\\
\texttt{\{chaopan2,jsima,sauravp2,vishalr,milenkov\}@illinois.edu}
}
\begin{document}

\maketitle

\begin{abstract}
Federated clustering (FC) is an unsupervised learning problem that arises in a number of practical applications, including personalized recommender and healthcare systems. With the adoption of recent laws ensuring the ``right to be forgotten'', the problem of machine unlearning for FC methods has become of significant importance. We introduce, for the first time, the problem of machine unlearning for FC, and propose an efficient unlearning mechanism for a customized secure FC framework. Our FC framework utilizes special initialization procedures that we show are well-suited for unlearning. To protect client data privacy, we develop the \emph{secure compressed multiset aggregation (SCMA)} framework that addresses sparse secure federated learning (FL) problems encountered during clustering as well as more general problems. To simultaneously facilitate low communication complexity and secret sharing protocols, we integrate Reed-Solomon encoding with special evaluation points into our SCMA pipeline, and prove that the client communication cost is \emph{logarithmic} in the vector dimension. Additionally, to demonstrate the benefits of our unlearning mechanism over complete retraining, we provide a theoretical analysis for the unlearning performance of our approach. Simulation results show that the new FC framework exhibits superior clustering performance compared to previously reported FC baselines when the cluster sizes are highly imbalanced. Compared to completely retraining $K$-means++ locally and globally for each removal request, our unlearning procedure offers an average speed-up of roughly $84$x across seven datasets. Our implementation for the proposed method is available at \url{https://github.com/thupchnsky/mufc}.
\end{abstract}

\vspace{-3pt}
\section{Introduction}\label{sec:intro}
\vspace{-3pt}
The availability of large volumes of user training data has contributed to the success of modern machine learning models. For example, most state-of-the-art computer vision models are trained on large-scale image datasets including Flickr~\citep{thomee2016yfcc100m} and ImageNet~\citep{deng2009imagenet}. Organizations and repositories that collect and store user data must comply with privacy regulations, such as the recent European Union General Data Protection Regulation (GDPR), the California Consumer Privacy Act (CCPA), and the Canadian Consumer Privacy Protection Act (CPPA), all of which guarantee the right of users to remove their data from the datasets (\emph{Right to be Forgotten}). Data removal requests frequently arise in practice, especially for sensitive datasets pertaining to medical records (numerous machine learning models in computational biology are trained using UK Biobank~\citep{sudlow2015uk} which hosts a collection of genetic and medical records of roughly half a million patients~\citep{ginart2019making}). Removing user data from a dataset is insufficient to ensure sufficient privacy, since training data can often be reconstructed from trained models~\citep{fredrikson2015model,veale2018algorithms}. This motivates the study of \emph{machine unlearning}~\citep{cao2015towards} which aims to efficiently eliminate the influence of certain data points on a model. Naively, one can retrain the model from scratch to ensure complete removal, yet retraining comes at a high computational cost and is thus not practical when accommodating frequent removal requests. To avoid complete retraining, specialized approaches have to be developed for each unlearning application~\citep{ginart2019making,pmlr-v119-guo20c,bourtoule2021machine,sekhari2021remember}.

At the same time, federated learning (FL) has emerged as a promising approach to enable distributed training over a large number of users while protecting their privacy~\citep{mcmahan2017communication,chen2020breaking,kairouz2021advances,wang2021field,bonawitz2021federated}. The key idea of FL is to keep user data on their devices and train global models by aggregating local models in a communication-efficient and secure manner. Due to model inversion attacks~\citep{zhu2019deep,geiping2020inverting}, secure local model aggregation at the server is a critical consideration in FL, as it guarantees that the server cannot get specific information about client data based on their local models~\citep{bonawitz2017practical,bell2020secure,so2022lightsecagg,chen2022fundamental}. Since data privacy is the main goal in FL, it should be natural for a FL framework to allow for frequent data removal of a subset of client data in a cross-silo setting (e.g., when several patients request their data to be removed in the hospital database), or the entire local dataset for clients in a cross-device setting (e.g., when users request apps not to track their data on their phones). This leads to the largely unstudied problem termed \emph{federated unlearning}~\citep{liu2021federaser,wu2022federated,wang2022federated}. However, existing federated unlearning methods do not come with theoretical performance guarantees after model updates, and often, they are vulnerable to adversarial attacks.

Our contributions are summarized as follows. \textbf{1)} We introduce the problem of machine unlearning in FC, and design a new end-to-end system (Fig.~\ref{fig:framework_overview}) that performs highly efficient FC with privacy and low communication-cost guarantees, which also enables, when needed, simple and effective unlearning. \textbf{2)} As part of the FC scheme with unlearning features, we describe a novel one-shot FC algorithm that offers order-optimal approximation for the federated $K$-means clustering objective, and also outperforms the handful of existing related methods~\citep{dennis2021heterogeneity, ginart2019making}, especially for the case when the cluster sizes are highly imbalanced. \textbf{3)} For FC, we also describe a novel sparse compressed multiset aggregation (SCMA) scheme which ensures that the server only has access to the aggregated counts of points in individual clusters but has no information about the point distributions at individual clients. SCMA securely recovers the exact sum of the input sparse vectors with a communication complexity that is logarithmic in the vector dimension, outperforming existing sparse secure aggregation works~\citep{beguier2020efficient, ergun2021sparsified}, which have a linear complexity. \textbf{4)} We theoretically establish the unlearning complexity of our FC method and show that it is significantly lower than that of complete retraining. \textbf{5)} We compile a collection of datasets for benchmarking unlearning of federated clusters, including two new datasets containing methylation patterns in cancer genomes and gut microbiome information, which may be of significant importance to computational biologists and medical researchers that are frequently faced with unlearning requests. Experimental results reveal that our one-shot algorithm offers an average speed-up of roughly $84$x compared to complete retraining across seven datasets.

\begin{figure}[t]
    \centering
    \includegraphics[width=\linewidth]{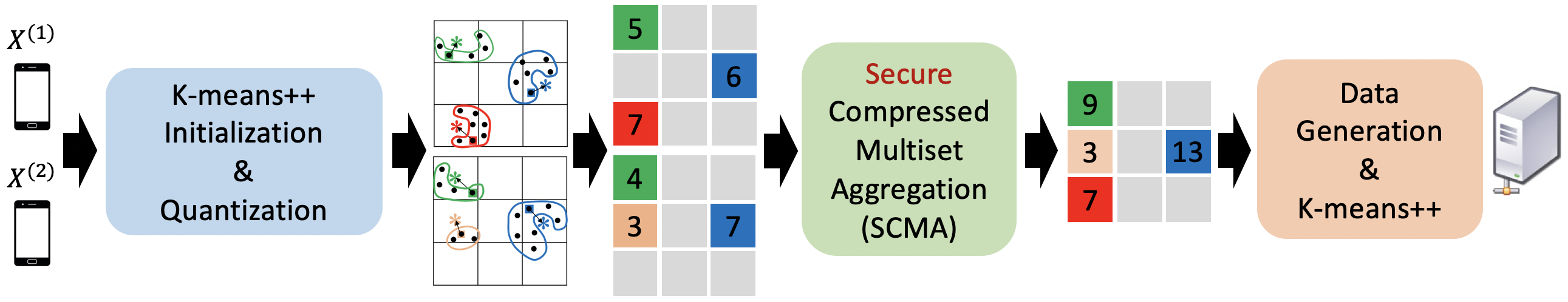}
    \vspace{-0.3in}
    \caption{Overview of our proposed FC framework. $K$-means++ initialization and quantization are performed at each client in parallel. The SCMA procedure ensures that only the server knows the aggregated statistics of clients, without revealing who contributed the points in each individual cluster. The server generates points from the quantization bins with prescribed weights and performs full $K$-means++ clustering to infer the global model.}
    \vspace{-0.2in}
    \label{fig:framework_overview}
\end{figure}

\vspace{-3pt}
\section{Related Works}\label{sec:related}
\vspace{-3pt}
Due to space limitations, the complete discussion about related works is included in Appendix~\ref{app:related}.

\textbf{Federated clustering.} The goal of this learning task is to perform clustering using data that resides at different edge devices. Most of the handful of FC methods are centered around the idea of sending exact~\citep{dennis2021heterogeneity} or quantized client (local) centroids~\citep{ginart2019making} directly to the server, which may not ensure desired levels of privacy as they leak the data statistics or cluster information of each individual client. To avoid sending exact centroids,~\citet{li2022secure} proposes sending distances between data points and centroids to the server without revealing the membership of data points to any of the parties involved, but their approach comes with large computational and communication overhead. Our work introduces a novel communication-efficient secure FC framework, with a new privacy criterion that is intuitively appealing as it involves communicating obfuscated point counts of the clients to the server and frequently used in FL literature~\citep{bonawitz2017practical}.

\textbf{Machine unlearning.} Two types of unlearning requirements were proposed in previous works: exact unlearning~\citep{cao2015towards,ginart2019making,bourtoule2021machine,chen2021graph} and approximate unlearning~\citep{pmlr-v119-guo20c,golatkar2020eternal,golatkar2020forgetting,sekhari2021remember,fu2022knowledge,chien2022certified}. For exact unlearning, the unlearned model is required to perform identically as a completely retrained model. For approximate unlearning, the ``differences'' in behavior between the unlearned model and the completely retrained model should be appropriately bounded. A limited number of recent works also investigated data removal in the FL settings~\citep{liu2021federaser,wu2022federated,wang2022federated}; however, most of them are empirical methods and do not come with theoretical guarantees for model performance after removal and/or for the unlearning efficiency. In contrast, our proposed FC framework not only enables efficient data removal in practice, but also provides theoretical guarantees for the unlearned model performance and for the expected time complexity of the unlearning procedure.

\vspace{-3pt}
\section{Preliminaries}\label{sec:prelim}
\vspace{-3pt}
We start with a formal definition of the centralized $K$-means problem. Given a set of $n$ points in $\mathbb{R}^d$ $\mathcal{X}$ arranged into a matrix $X \in \mathbb{R}^{n\times d}$, and the number of clusters $K$, the $K$-means problem asks for finding a set of points $\mathbf{C}=\{c_1,...,c_K\}, c_k \in \mathbb{R}^d, \forall k\in[K]$ that minimizes the objective
\begin{equation}\label{eq:central_kmeans}
   \phi_c(\mathcal{X};\mathbf{C}) = \|X-C\|_F^2,
\end{equation}
where $||\cdot||_F$ denotes the Frobenius norm of a matrix, $\|\cdot\|$ denotes the $\ell_2$ norm of a vector, and $C\in\mathbb{R}^{n\times d}$ records the closest centroid in $\mathbf{C}$ to each data point $x_i\in\mathcal{X}$ (i.e., $c_i=\argmin_{c_j\in\mathbf{C}}\|x_i-c_j\|$).  Without loss of generality, we make the assumption that the optimal solution is unique in order to facilitate simpler analysis and discussion, and denote the optimum by $\mathbf{C}^* = \{c_1^*,...,c_K^*\}$. The set of centroids $\mathbf{C}^*$ induces an optimal partition $\bigcup_{k=1}^{K}\mathcal{C}_k^*$ over $\mathcal{X}$, where $\forall k\in[K], \mathcal{C}_k^* = \{x_i: ||x_i-c_k^*||\leq ||x_i-c_j^*||\;\forall i\in[n],j\in[K]\}$. We use $\phi_c^*(\mathcal{X})$ to denote the optimal value of the objective function for the centralized $K$-means problem. With a slight abuse of notation, we also use $\phi_c^*(\mathcal{C}_k^*)$ to denote the objective value contributed by the optimal cluster $\mathcal{C}_k^*$. A detailed description of a commonly used approach for solving the $K$-means problem, $K$-means++, is available in Appendix~\ref{app:kpp_init}. 

In FC, the dataset $\mathcal{X}$ is no longer available at the centralized server. Instead, data is stored on $L$ edge devices (clients) and the goal of FC is to learn a global set of $K$ centroids $\mathbf{C}_s$ at the server based on the information sent by clients. For simplicity, we assume that there exists no identical data points across clients, and that the overall dataset $\mathcal{X}$ is the union of the datasets $\mathcal{X}^{(l)}$ arranged as $X^{(l)}\in\mathbb{R}^{n^{(l)}\times d}$ on device $l, \forall l\in[L]$. 
The server will receive the aggregated cluster statistics of all clients in a secure fashion, and generate the set $\mathbf{C}_s$. In this case, the federated $K$-means problem asks for finding $K$ global centroids $\mathbf{C}_s$ that minimize the objective
\begin{equation}\label{eq:fl_kmeans}
    \phi_f(\mathcal{X};\mathbf{C}_s) = \sum_{l=1}^L\|X^{(l)}-C_s^{(l)}\|_F^2,
\end{equation}
where $C_s^{(l)}\in\mathbb{R}^{n^{(l)}\times d}$ records the centroids of the \textit{induced global clusters} that data points $\{x_i^{(l)}\}_{i=1}^{n^{(l)}}$ on client $l$ belong to. Note that the definition of the assignment matrix $C$ for the centralized $K$-means is different from that obtained through federated $K$-means $C_s^{(l)}$: the $i$-th row of $C$ only depends on the location of $x_i$ while the row in $C_s^{(l)}$ corresponding to $x_i$ depends on the induced global clusters that $x_i$ belongs to (for a formal definition see~\ref{def:induced_cluster}). In Appendix \ref{app:phi_difference}, we provide a simple example that further illustrates the difference between $C$ and $C_s^{(l)}$. Note that the notion of induced global clusters was also used in~\cite{dennis2021heterogeneity}.

\begin{definition}
\label{def:induced_cluster}
Suppose that the local clusters at client $l$ are denoted by $\mathcal{C}_k^{(l)},\forall k\in[K],l\in[L]$, and that the clusters at the server are denoted by $\mathcal{C}_k^s,\forall k\in[K]$. The global clustering equals
$
\mathcal{P}_k=\{x_i^{(l)}| x_i^{(l)}\in\mathcal{C}_j^{(l)}, c_j^{(l)}\in\mathcal{C}_k^s,\forall j\in[K], l\in[L]\},
$
where $c_j^{(l)}$ is the centroid of $\mathcal{C}_j^{(l)}$ on client $l$. Note that $(\mathcal{P}_1,\ldots,\mathcal{P}_K)$ forms a partition of the entire dataset $\mathcal{X}$, and the representative centroid for $\mathcal{P}_k$ is defined as $c_{s,k}\in\mathbf{C}_s$.
\end{definition}

\textbf{Exact unlearning.} For clustering problems, the \emph{exact unlearning} criterion may be formulated as follows. Let $\mathcal{X}$ be a given dataset and $\mathcal{A}$ a (randomized) clustering algorithm that trains on $\mathcal{X}$ and outputs a set of centroids $\mathbf{C}\in \mathcal{M}$, where $\mathcal{M}$ is the chosen space of models. Let $\mathcal{U}$ be an unlearning algorithm that is applied to $\mathcal{A}(\mathcal{X})$ to remove the effects of one data point $x\in\mathcal{X}$. Then $\mathcal{U}$ is an exact unlearning algorithm if $\forall \mathbf{C}\in \mathcal{M}, x\in\mathcal{X}, \mathbb{P}(\mathcal{U}(\mathcal{A}(\mathcal{X}), \mathcal{X}, x)=\mathbf{C}) = \mathbb{P}(\mathcal{A}(\mathcal{X}\backslash x)=\mathbf{C})$. To avoid confusion, in certain cases, this criterion is referred to as \emph{probabilistic (model) equivalence.}

\textbf{Privacy-accuracy-efficiency trilemma.} How to trade-off data privacy, model performance, communication and computational efficiency is a long-standing problem in distributed learning~\citep{acharya2019communication,chen2020breaking,gandikota2021vqsgd} that also carries over to FL and FC. Solutions that simultaneously address all these challenges in the latter context are still lacking. For example,~\citet{dennis2021heterogeneity} proposed a one-shot algorithm that takes model performance and communication efficiency into consideration by sending the \emph{exact} centroids of each client to the server in a \emph{nonanonymous} fashion. This approach may not be desirable under stringent privacy constraints as the server can gain information about individual client data. On the other hand, privacy considerations were addressed in~\citet{li2022secure} by performing $K$-means Lloyd's iterations anonymously via distribution of computations across different clients. Since the method relies on obfuscating pairwise distances for each client, it incurs computational overheads to hide the identity of contributing clients at the server and communication overheads due to interactive computations. None of the above methods is suitable for unlearning applications. To simultaneously enable unlearning and address the trilemma in the unlearning context, our privacy criterion involves transmitting \emph{the number of client data points within local client clusters} in such a manner that the server cannot learn the data statistics of any specific client, but only the overall statistics of the union of client datasets. In this case, computations are limited and the clients on their end can perform efficient unlearning, unlike the case when presented with data point/centroid distances.

\begin{wrapfigure}[23]{L}{0.51\textwidth}
\begin{minipage}{0.51\textwidth}
\vspace{-25pt}
\begin{algorithm}[H]
   \caption{Secure Federated Clustering}
   \label{alg:fl_clustering}
   \begin{algorithmic}[1]
    \STATE \textbf{input:} Dataset $\mathcal{X}$ distributed on $L$ clients ($\mathcal{X}^{(1)},\ldots,\mathcal{X}^{(L)}$).
    \STATE Run $K$-means++ initialization on each client $l$ in parallel, obtain the initial centroid sets $\mathbf{C}^{(l)}$, and record the corresponding cluster sizes $(|\mathcal{C}_1^{(l)}|, \ldots, |\mathcal{C}_{K}^{(l)}|),\;\forall l\in[L]$.
    \STATE Perform uniform quantization of $\mathbf{C}^{(l)}$ on each dimension, and flatten the quantization bins into a vector $q^{(l)}$, $\forall l\in[L]$.
    \STATE Set $q^{(l)}_j=\left|\mathcal{C}_k^{(l)}\right|$ with $j$ being the index of the quantization bin where $c_k^{(l)}$ lies in for $\forall k\in[K]$, and $c_k^{(l)}$ is the centroid of $\mathcal{C}_k^{(l)}$. Set $q^{(l)}_j=0$ for all other indices.
    \STATE Securely sum up $q^{(l)}$ at server by Algorithm~\ref{alg:fl_secure}, with the aggregated vector denoted as $q$.
    \STATE For index $j\in\{t:q_t\neq 0\}$, sample $q_j$ points based on pre-defined distribution and denote their union as new dataset $\mathcal{X}_s$ at server.
    \STATE Run full $K$-means++ clustering at server with $\mathcal{X}_s$ to obtain the centroid set $\mathbf{C}_s$ at server.
    \RETURN Each client retains its own centroid set $\mathbf{C}^{(l)}$, server retains $\mathcal{X}_s, q$ and $\mathbf{C}_s$.
\end{algorithmic}
\end{algorithm}
\end{minipage}
\end{wrapfigure}
\textbf{Random and adversarial removal.} Most unlearning literature focuses on the case when all data points are equally likely to be removed, a setting known as \emph{random removal}. However, adversarial data removal requests may arise when users are malicious in unlearning certain points that are critical for model training (i.e., boundary points in optimal clusters). We refer to such a removal request as \emph{adversarial removal}. In Section~\ref{sec:mu}, we provide theoretical analysis for both types of removal.

\section{Federated Clustering with Secure Model Aggregation}\label{sec:fl}

The block diagram of our FC (Alg.~\ref{alg:fl_clustering}) is depicted in Fig.~\ref{fig:framework_overview}. It comprises five components: a client-side clustering, client local information processing, secure compressed aggregation, server data generation, and server-side clustering module. We explain next the role of each component of the system.

For client- and server-side clustering (line $2$ and $7$ of Alg.~\ref{alg:fl_clustering}), we adopt $K$-means++  as it lends it itself to highly efficient unlearning, as explained in Section~\ref{sec:mu}. Specifically, we only run the $K$-means++ initialization procedure at each client but full $K$-means++ clustering (initialization and Lloyd's algorithm) at the server.

Line $3$ and $4$ of Alg.~\ref{alg:fl_clustering} describe the procedure used to process the information of local client clusters. As shown in Fig.~\ref{fig:framework_overview}, we first quantize the local centroids to their closest centers of the quantization bins, and the spatial locations of quantization bins naturally form a tensor, in which we store the sizes of local clusters. A tensor is generated for each client $l$, and subsequently flattened to form a vector $q^{(l)}$. For simplicity, we use uniform quantization with step size $\gamma$ for each dimension (line 3 of Alg.~\ref{alg:fl_clustering}, with more details included in Appendix~\ref{app:quant}). The parameter $\gamma>0$ determines the number of quantization bins in each dimension. If the client data is not confined to the unit hypercube centered at the origin, we scale the data to meet this requirement. Then the number of quantization bins in each dimension equals $B=\gamma^{-1}$, while the total number of quantization bins for $d$ dimensions is $B^d=\gamma^{-d}$.

Line $5$ of Alg.~\ref{alg:fl_clustering} describes how to aggregate information efficiently at the server without leaking individual client data statistics. This scheme is discussed in Section~\ref{sec:secure_agg}. Line $6$ pertains to generating $q_j$ points for the $j$-th quantization bin based on its corresponding spatial location. The simplest idea is to choose the center of the quantization bin as the representative point and assign weight $q_j$ to it. Then, in line $7$, we can use the weighted $K$-means++ algorithm at the server to further reduce the computational complexity.

A simplified version of Alg.~\ref{alg:fl_clustering} is discussed in Appendix~\ref{app:fl_nonsecure_clustering}, for applications where the privacy criterion is not an imperative.

\subsection{SCMA at the Server}\label{sec:secure_agg}

\begin{wrapfigure}[30]{L}{0.5\textwidth}
\begin{minipage}{0.5\textwidth}
\vspace{-23pt}
\begin{algorithm}[H]
  \caption{SCMA}
  \label{alg:fl_secure}
  \begin{algorithmic}[1]
    \STATE \textbf{input:} $L$ different vectors $q^{(l)}$ of length $B^d$ to be securely aggregated, a finite field $\mathbb{F}_p$.
    \STATE Each client $l\in [L]$ communicates $(S^{(l)}_1,\ldots,S^{(l)}_{2KL})$ to the server, where
    $S^{(l)}_i=(\sum_{j:q^{(l)}_j\ne 0}q^{(l)}_j \cdot j^{i-1}+z^{(l)}_i) \text{ mod } p, i\in[2KL]$
     and $z^{(l)}_i$ is a random key uniformly distributed over $\mathbb{F}_p$ and hidden from the server. The keys $\{z^{(l)}_i\}_{l\in[L],i\in[2KL]}$ are generated offline using standard secure model aggregation so that $(\sum_{l}z^{(l)}_i) \text{ mod } p = 0$.
    \STATE The server first computes the sum $S_i=(\sum_{l\in[L]}S^{(l)}_i) \text{ mod } p$. Given $S_i$, the server computes the coefficients of the polynomial $g(x)=\prod_{j:q_j\ne 0}(1-j \cdot x)$ using the Berlekamp-Massey algorithm~\citep{Berlekamp1968AlgebraicCT,massey1969shift}. Then, the server factorizes $g(x)$ over the field $\mathbb{F}_p$ to determine the roots $j^{-1}$, $q_j\ne 0$, using the polynomial factorizing algorithm~\citep{kedlaya2011fast}. Finally, the server solves a set of $2KL$ linear equations
    $S_i=\sum_{l\in[L]}S^{(l)}_i=\sum_{j:q_j\ne 0}q_j \cdot j^{i-1}$  for $i\in[2KL]$, by considering $q_j$ as unknowns and $j^{i-1}$ as known coefficients for $q_j\ne 0$. 
        \RETURN $q$ reconstructed at the server.
  \end{algorithmic}
\end{algorithm}
\vspace{10pt}
\end{minipage}
\end{wrapfigure}

Once the vector representations $q^{(l)}$ of length $B^d$ for client $l$ are generated (line $4$ of Alg.~\ref{alg:fl_clustering}), we can use standard secure model aggregation methods~\citep{bonawitz2017practical,bell2020secure,so2022lightsecagg} to sum up all $q^{(l)}$ securely and obtain the aggregated results $q$ at the server. However, since the length of each vector $q^{(l)}$ is $B^d$, securely aggregating the whole vector would lead to an exponential communication complexity for each client. Moreover, each $q^{(l)}$ is a sparse vector since the number of client centroids is much smaller than the number of quantization bins (i.e., $K\ll B^d$). It is inefficient and unnecessary for each client to send out the entire $q^{(l)}$ with noisy masks for aggregation. This motivates us to first compress the vectors and then perform the secure aggregation, and we refer to this process as SCMA (Alg.~\ref{alg:fl_secure}), with one example illustrated in Fig.~\ref{fig:secure_example}.

By observing that there can be at most $K$ nonzero entries in $q^{(l)},\forall l\in[L]$ and at most $KL$ nonzero entries in $q$, we invoke the Reed-Solomon code construction~\citep{reed1960polynomial} for designing SCMA. Let $\mathbb{F}_p=\{0,1,\ldots,p-1\}$ be a finite field of prime order $p\ge \max\{n,B^d\}$. We treat the indices of the quantization bins as distinct elements from the underlying finite field, and use them as evaluation points of the encoder polynomial. In addition, we treat a nonzero entry $q^{(l)}_j$ in vector $q^{(l)}$ as a substitution error at the $j$-th entry in a codeword. Then, we use our SCMA scheme shown in Alg.~\ref{alg:fl_secure}, where the messages that the clients send to server can be treated as syndromes in Reed-Solomon decoding. Note that in our scheme, the server does not know $q^{(l)},l\in[L]$ beyond the fact that $\sum_{l\in[L]}q^{(l)}=q$, which fits into our privacy criterion. This follows because $z^{(l)}_i$ is uniformly distributed over $\mathbb{F}_p$ and independently chosen for different $l\in[L], i\in[2KL]$. For details, please refer to Appendix~\ref{sec:uniqueness}.




\subsection{Performance Analysis}
We describe next the performance guarantees of Alg.~\ref{alg:fl_clustering} w.r.t. the objective defined in Eq.~(\ref{eq:fl_kmeans}).

\begin{wrapfigure}[14]{L}{0.49\textwidth}
\begin{minipage}{0.49\textwidth}
\vspace{-23pt}
\begin{figure}[H]
    \centering
  \includegraphics[width=0.98\linewidth]{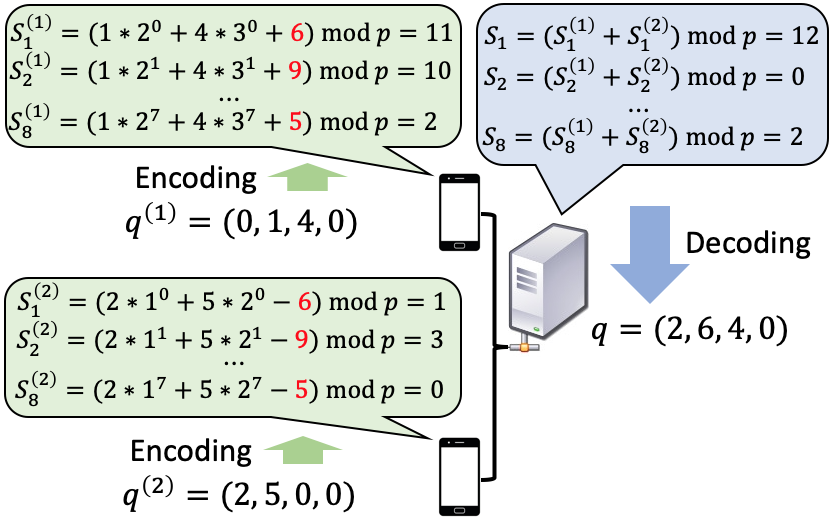}
    \vspace{-0.1in}
    \caption{Example of the SCMA procedure for $K=2,L=2,B^d=4,n=12,p=13$.}
    \vspace{-0.1in}
    \label{fig:secure_example}
\end{figure}
\end{minipage}
\end{wrapfigure}

\begin{theorem}\label{thm:fl_performance_quantize}
Suppose that we performed uniform quantization with step size $\gamma$ in Algorithm~\ref{alg:fl_clustering}. Then we have
$\mathbb{E}\left(\phi_f(\mathcal{X};\mathbf{C}_s)\right) < O(\log^2 K)\cdot \phi_c^*(\mathcal{X}) + O(nd\gamma^2\log K).$
\end{theorem}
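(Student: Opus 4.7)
The plan is to decompose the federated objective via two applications of the triangle inequality into a ``within-client'' term, a quantization slack, and an ``across-client'' term, and to invoke the Arthur--Vassilvitskii $K$-means++ seeding guarantee exactly twice: once at every client (line~2 of Alg.~\ref{alg:fl_clustering}) and once at the server (line~7). These two invocations account for the $\log^2 K$ factor, while the uniform quantization contributes only an additive $O(nd\gamma^2)$ slack because each centroid is displaced by at most $\tfrac{\sqrt d}{2}\gamma$ in $\ell_2$ norm.

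Write $\tilde c_k^{(l)}$ for the bin-center quantization of $c_k^{(l)}$ and, following Definition~\ref{def:induced_cluster}, let $c_s(k,l)\in \mathbf{C}_s$ denote the server centroid whose induced partition absorbs the bin of $\tilde c_k^{(l)}$. For any $x_i^{(l)}\in \mathcal{C}_k^{(l)}$, the inequality $\|a-c\|^2\le 2\|a-b\|^2+2\|b-c\|^2$ applied twice gives
\[
\|x_i^{(l)}-c_s(k,l)\|^2 \le 4\|x_i^{(l)}-c_k^{(l)}\|^2 + 4\|c_k^{(l)}-\tilde c_k^{(l)}\|^2 + 2\|\tilde c_k^{(l)}-c_s(k,l)\|^2.
\]
Summing the first term over all points yields $\sum_l \phi_c(\mathcal{X}^{(l)};\mathbf{C}^{(l)})$; the per-client $K$-means++ seeding bound gives $\mathbb{E}[\phi_c(\mathcal{X}^{(l)};\mathbf{C}^{(l)})]\le O(\log K)\,\phi_c^*(\mathcal{X}^{(l)})$, and restricting $\mathbf{C}^*$ to the points of client $l$ as a feasible sub-solution shows $\sum_l \phi_c^*(\mathcal{X}^{(l)})\le \phi_c^*(\mathcal{X})$, so in expectation the first term contributes $O(\log K)\phi_c^*(\mathcal{X})$. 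The second term is uniformly bounded by $d\gamma^2/4$ per point, contributing the additive $O(nd\gamma^2)$.

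The third term equals the weighted $K$-means cost $\phi_c(\mathcal{X}_s;\mathbf{C}_s)$ of the server-side clustering, where $\mathcal{X}_s$ is the multiset of bin-center copies generated in line~6 of Alg.~\ref{alg:fl_clustering}. Conditioning on $\mathcal{X}_s$, the Arthur--Vassilvitskii bound (which extends with the same constants to repeated points) gives $\mathbb{E}[\phi_c(\mathcal{X}_s;\mathbf{C}_s)\mid\mathcal{X}_s]\le O(\log K)\,\phi_c^*(\mathcal{X}_s)$. To convert $\phi_c^*(\mathcal{X}_s)$ back into $\phi_c^*(\mathcal{X})$ I use $\mathbf{C}^*$ as a feasible solution on $\mathcal{X}_s$ and apply the triangle inequality a third time: for each $x_i\in\mathcal{C}_k^{(l)}$ with closest optimal centroid $c^*(x_i)$,
\[
\|\tilde c_k^{(l)}-c^*(x_i)\|^2 \le 2\|\tilde c_k^{(l)}-x_i\|^2 + 2\|x_i-c^*(x_i)\|^2.
\]
Averaging over $i\in \mathcal{C}_k^{(l)}$, multiplying by $|\mathcal{C}_k^{(l)}|$, and summing over $(k,l)$ shows that $\phi_c^*(\mathcal{X}_s)$ is bounded by twice the within-client term already controlled plus $2\phi_c^*(\mathcal{X})$, so $\mathbb{E}[\phi_c^*(\mathcal{X}_s)]\le O(\log K)\phi_c^*(\mathcal{X})+O(nd\gamma^2)$. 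Multiplying by the server-side $O(\log K)$ and combining with the within-client contributions yields the claimed bound.

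The main obstacle is this final ``reverse'' comparison: one must bound $\phi_c^*(\mathcal{X}_s)$ by the centralized optimum without accumulating a third $\log K$ factor. The point that prevents this is that $\mathbf{C}^*$ is a deterministic reference, so the triangle-inequality detour through the within-client term inflates the estimate only by constants. As a consequence, $\log K$ appears exactly twice---once per $K$-means++ invocation---producing the $O(\log^2 K)\phi_c^*(\mathcal{X})$ leading term and the $O(nd\gamma^2 \log K)$ additive term stated in the theorem.
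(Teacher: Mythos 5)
Your proposal is correct and follows essentially the same route as the paper's proof: the same three-term decomposition into within-client cost, quantization slack bounded by $nd\gamma^2/4$, and server-side cost, with the $K$-means++ seeding guarantee invoked once per level and the server-level optimum converted back to $\phi_c^*(\mathcal{X})$ by a constant-factor triangle-inequality detour through the data points (the paper's bound $\|\widehat{C}^{(l)}-C^{*,(l)}\|_F^2\leq 3(\|\widehat{C}^{(l)}-C^{(l)}\|_F^2+\|C^{(l)}-X^{(l)}\|_F^2+\|X^{(l)}-C^{*,(l)}\|_F^2)$ is exactly your third step). The only differences are presentational—explicit conditioning on $\mathcal{X}_s$ and slightly different absolute constants—so no gap to report.
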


The performance guarantee in Theorem~\ref{thm:fl_performance_quantize} pertains to two terms: the approximation of the optimal objective value and the quantization error (line $3$ of Alg.~\ref{alg:fl_clustering}). For the first term, the approximation factor $O(\log^2 K)$ is order-optimal for one-shot FC algorithms since one always needs to perform two rounds of clustering and each round will contribute a factor of $O(\log K)$. To make the second term a constant w.r.t. $n$, we can choose $\gamma=\Theta(1/\sqrt{n})$, which is a good choice in practice for the tested datasets as well. The above conclusions hold for any distribution of data across clients. Note that SCMA does not contribute to the distortion as it always returns the exact sum, while other methods for sparse secure aggregation based on sparsification~\citep{han2020adaptive} may introduce errors and degrade the FC objective. See Appendix~\ref{app:fl_performance_quantize} for more details.



\subsection{Complexity Analysis}\label{sec:fl_complexity}
We derived a cohort of in-depth analysis pertaining to the computational and communication complexity for our proposed FC framework (Alg.~\ref{alg:fl_clustering}). Due to space limitations, these results are summarized in Appendix~\ref{app:complexity}.


\vspace{-3pt}
\section{Machine Unlearning via Specialized Seeding}\label{sec:mu}
\vspace{-3pt}
We first describe an intuitive exact unlearning mechanism (Alg.~\ref{alg:mu_kpp}) for $K$-means clustering in the centralized setting, which will be used later on as the unlearning procedure on the client-sides of the FC framework described in Section~\ref{sec:ufc}. The idea behind Alg.~\ref{alg:mu_kpp} is straightforward: one needs to rerun the $K$-means++ initialization, corresponding to retraining only if the current centroid set $\mathbf{C}$ contains at least one point requested for removal. This follows from two observations. First, since the centroids chosen through $K$-means++ initialization are true data points, the updated centroid set $\mathbf{C}^\prime$ returned by Alg.~\ref{alg:mu_kpp} is guaranteed to contain no information about the data points that have been removed. Second, as we will explain in the next section, Alg.~\ref{alg:mu_kpp} also satisfies the exact unlearning criterion (defined in Section \ref{sec:prelim}).

\begin{wrapfigure}[22]{L}{0.47\textwidth}
\begin{minipage}{0.47\textwidth}
\vspace{-24pt}
\begin{algorithm}[H]
   \caption{Unlearning via $K$-means++ Init.}
   \label{alg:mu_kpp}
   \begin{algorithmic}[1]
    \STATE \textbf{input:} Dataset $\mathcal{X}$, centroid set $\mathbf{C}$ obtained by $K$-means++ initialization on $\mathcal{X}$, removal request set $\mathcal{X}_R=\{x_{r_1},\ldots,x_{r_R}\}$.
    \IF{$c_j\notin \mathcal{X}_R\;\forall c_j\in\mathbf{C}$}
    \STATE $\mathbf{C}^\prime \leftarrow \mathbf{C}$
    \ELSE
    \STATE $i\leftarrow (\argmin_j c_j\in \mathcal{X}_R) - 1$
    \IF{$i=0$}
    \STATE $\mathbf{C}^\prime \leftarrow \varnothing$, $\mathcal{X}^\prime\leftarrow \mathcal{X}\backslash\mathcal{X}_R$.
    \ELSE
    \STATE $\mathbf{C}^\prime \leftarrow \{c_1,\ldots,c_i\}$, $\mathcal{X}^\prime\leftarrow \mathcal{X}\backslash\mathcal{X}_R$.
    \ENDIF
    \FOR{$j = i+1,\ldots,K$}
      \STATE Sample $x$ from $\mathcal{X}^\prime$ with prob
        $\frac{d^2(x,\mathbf{C}^\prime)}{\phi_c(\mathcal{X}^\prime;\mathbf{C}^\prime)}$.
      \STATE $\mathbf{C}^\prime \leftarrow\mathbf{C}^\prime\cup \{x\}$.
    \ENDFOR
    \ENDIF
    \RETURN $\mathbf{C}^\prime$
  \end{algorithmic}
\end{algorithm}
\end{minipage}
\vspace{0.1in}
\end{wrapfigure}

\subsection{Performance Analysis}
To verify that Alg.~\ref{alg:mu_kpp} is an exact unlearning method, we need to check that $\mathbf{C}^\prime$ is probabilistically equivalent to the models generated by rerunning the $K$-means++ initialization process on $\mathcal{X}^\prime$, the set of point remaining after removal. This is guaranteed by Lemma~\ref{lma:mu_kpp}, and a formal proof is provided in Appendix~\ref{app:mu_kpp}.

\begin{lemma}\label{lma:mu_kpp}
For any set of data points $\mathcal{X}$ and removal set $\mathcal{X}_R$, assuming that the remaining dataset is $\mathcal{X}^\prime=\mathcal{X}\backslash\mathcal{X}_R$ and the centroid set returned by Algorithm~\ref{alg:mu_kpp} is $\mathbf{C}^\prime$, we have 
$$
\mathbb{P}(\mathcal{U}(\mathcal{A}(\mathcal{X}), \mathcal{X}, \mathcal{X}_R)=\mathbf{C}) = \mathbb{P}(\mathcal{A}(\mathcal{X}^\prime)=\mathbf{C});\; \mathbb{E}(\phi_c(\mathcal{X}^\prime;\mathbf{C}^\prime))\leq 8(\ln K+2)\phi^*_c(\mathcal{X}^\prime),
$$
where $\mathcal{A}$ represents Algorithm~\ref{alg:fl_clustering} and $\mathcal{U}$ represents the unlearning mechanism in Algorithm~\ref{alg:mu_kpp}.
\end{lemma}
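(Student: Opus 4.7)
The lemma contains two assertions: a probabilistic-equivalence identity and an Arthur--Vassilvitskii-style approximation bound. My plan is to prove the equivalence directly and to deduce the cost bound as an immediate corollary, since once $\mathbf{C}'$ is shown to be distributed exactly as the output of $K$-means++ initialization run from scratch on $\mathcal{X}'$, the factor $8(\ln K+2)$ is the classical $K$-means++ seeding guarantee applied verbatim to $\mathcal{X}'$. Thus the substantive work reduces to establishing the distributional identity.

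For the equivalence I would first replace the batch description in Algorithm~\ref{alg:mu_kpp} (generate the whole $K$-means++ sequence on $\mathcal{X}$, then splice in a fresh tail on $\mathcal{X}'$) by an equivalent step-wise ``two-mode'' procedure with the same output law. For $j=1,\ldots,K$: while in \emph{clean} mode, draw $c_j$ from $p_{\mathcal{X}}(\cdot\mid\mathbf{C}_{j-1})=d^2(\cdot,\mathbf{C}_{j-1})/\phi_c(\mathcal{X};\mathbf{C}_{j-1})$; if $c_j\in\mathcal{X}_R$, discard it, redraw $c_j$ from $p_{\mathcal{X}'}(\cdot\mid\mathbf{C}_{j-1})$, and switch to \emph{recovery} mode; in recovery mode simply draw $c_j$ from $p_{\mathcal{X}'}(\cdot\mid\mathbf{C}_{j-1})$. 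This matches Algorithm~\ref{alg:mu_kpp} because both procedures retain the maximal clean prefix of a $K$-means++ run on $\mathcal{X}$ and then resample every remaining centroid from $p_{\mathcal{X}'}$; the suffix of the original $\mathcal{X}$-run that Algorithm~\ref{alg:mu_kpp} throws away is independent of what is kept and can simply be marginalized out.

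The heart of the argument is a one-step identity. Fix a prefix $\mathbf{C}^*_{j-1}\subset\mathcal{X}'$ and any $c\in\mathcal{X}'$. Using $\phi_c(\mathcal{X};\mathbf{C}^*_{j-1})=\phi_c(\mathcal{X}';\mathbf{C}^*_{j-1})+\phi_c(\mathcal{X}_R;\mathbf{C}^*_{j-1})$, the probability that the step-wise procedure selects $c_j=c$ while in clean mode equals
\[
\frac{d^2(c,\mathbf{C}^*_{j-1})}{\phi_c(\mathcal{X};\mathbf{C}^*_{j-1})}+\frac{\phi_c(\mathcal{X}_R;\mathbf{C}^*_{j-1})}{\phi_c(\mathcal{X};\mathbf{C}^*_{j-1})}\cdot\frac{d^2(c,\mathbf{C}^*_{j-1})}{\phi_c(\mathcal{X}';\mathbf{C}^*_{j-1})}=\frac{d^2(c,\mathbf{C}^*_{j-1})}{\phi_c(\mathcal{X}';\mathbf{C}^*_{j-1})},
\]
which is also the probability in recovery mode by construction. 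Hence, independent of mode, the conditional law of $c_j$ given any prefix in $\mathcal{X}'$ is exactly the $K$-means++ conditional on $\mathcal{X}'$; induction on $j$ (chain-rule factorization) yields that the full joint law of $(c_1,\ldots,c_K)$ produced by Algorithm~\ref{alg:mu_kpp} equals $p_{\mathcal{X}'}$, which is the equivalence. Substituting the Arthur--Vassilvitskii $K$-means++ bound then produces $\mathbb{E}[\phi_c(\mathcal{X}';\mathbf{C}')]\le 8(\ln K+2)\phi^*_c(\mathcal{X}')$.

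The main obstacle is not the algebra (the one-line identity above) but choosing the right equivalent description of Algorithm~\ref{alg:mu_kpp} so that induction on $j$ goes through cleanly. The boundary case $i=0$ and the first-centroid sampling require the standard $K$-means++ convention of drawing $c_1$ uniformly, under which the analogous identity $1/|\mathcal{X}|+(|\mathcal{X}_R|/|\mathcal{X}|)(1/|\mathcal{X}'|)=1/|\mathcal{X}'|$ holds, so no separate argument is needed.
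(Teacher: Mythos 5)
Your proposal is correct and follows essentially the same route as the paper's proof: an induction over the $K$ seeding steps built on the decomposition $\phi_c(\mathcal{X};\cdot)=\phi_c(\mathcal{X}^\prime;\cdot)+\phi_c(\mathcal{X}_R;\cdot)$, after which the $8(\ln K+2)$ bound is quoted verbatim from the $K$-means++ seeding guarantee applied to $\mathcal{X}^\prime$. The only difference is bookkeeping: you obtain the per-step law by summing the ``kept'' and ``redrawn'' branches of your two-mode reformulation, whereas the paper computes the conditional law of the retained draw given that it avoids $\mathcal{X}_R$ --- two algebraically equivalent ways of establishing the same one-step renormalization identity.
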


\subsection{Complexity Analysis}
We present next analytical results for the expected time complexity of removing a batch of $R$ data points simultaneously by our Alg.~\ref{alg:mu_kpp}. For this, we consider both random and adversarial removal scenarios. While the analysis for random removal is fairly straightforward, the analysis for adversarial removal requests requires us to identify which removals force frequent retraining from scratch. In this regard, we state two assumptions concerning optimal cluster sizes and outliers, which will allow us to characterize the worst-case scenario removal setting.
\begin{assumption}\label{def:imbalance}
Let $\epsilon_1=\frac{n}{K s_{\min}}$ be a constant denoting \emph{cluster size imbalance,} where $s_{\min}$ equals the size of the smallest cluster in the optimal clustering; when $\epsilon_1=1$, all clusters are of size $\frac{n}{K}$.
\end{assumption}
\begin{assumption}\label{def:outlier}
Assume that $\epsilon_2\geq 1$ is a fixed constant. An outlier $x_i$ in $\mathcal{X}$ satisfies $\|x_i-c_j^*\|\leq \|x_i-c_k^*\|, \forall k\in[K]$ and  $\|x_i-c_j^*\| > \sqrt{{\epsilon_2\phi_c^*(\mathcal{C}_j^*)}/{|\mathcal{C}_j^*|}}$.
\end{assumption}

Under Assumptions \ref{def:imbalance} and \ref{def:outlier}, we arrive at an estimate for the expected removal time presented in Theorem~\ref{thm:expected_removal_time} below. Notably, the expected removal time does not depend on the data set size $n$. 

\begin{theorem}\label{thm:expected_removal_time}
Assume that the number of data points in $\mathcal{X}$ is $n$ and the probability of the data set containing at least one outlier is upper bounded by $O\left({1}/{n}\right)$. Algorithm~\ref{alg:mu_kpp} supports removing $R$ points within one single request with expected time $\min\{O(RK^2d), O(nKd)\}$ for random removals, and expected time $\min\{O(RK^3\epsilon_1\epsilon_2d), O(nKd)\}$ in expectation for adversarial removals. The complexity for complete retraining equals $O(nKd)$.
\end{theorem}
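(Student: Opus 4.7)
The plan is to exploit the structural property that Algorithm~\ref{alg:mu_kpp} performs a (partial) resampling---the only expensive operation---if and only if the current centroid set $\mathbf{C}$ intersects the removal batch $\mathcal{X}_R$; otherwise it just performs an $O(RK)$ membership check and returns $\mathbf{C}$ unchanged. When a resampling is triggered, the worst case is $i=0$ in the algorithm, i.e.\ a full $K$-means++ initialization on the remaining data, which costs $O(nKd)$. Letting $p := \mathbb{P}(\mathbf{C}\cap\mathcal{X}_R\neq\emptyset)$, the expected time is therefore at most
\[
p\cdot O(nKd) + (1-p)\cdot O(RK) \;\le\; O(p\cdot nKd) + O(RK),
\]
and is also trivially at most $O(nKd)$ by retraining from scratch; these two bounds yield the $\min\{\cdot,\,O(nKd)\}$ form in the statement. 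The task thus reduces to bounding $p$ in each regime.

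For random removal, exchangeability of the uniformly sampled $\mathcal{X}_R$ gives $\mathbb{P}(x\in\mathbf{C})=K/n$ after averaging over $x$, so a union bound over the $R$ requests yields $p\le RK/n$ and hence expected time $O(RK^2 d)$. For adversarial removal the adversary targets points with the largest individual probability of being a centroid, so I need a \emph{pointwise} bound $\mathbb{P}(x_i\in\mathbf{C})\le O(K^2\epsilon_1\epsilon_2/n)$ valid for every $x_i\in\mathcal{X}$. My plan is to condition on the event $E$ that $\mathcal{X}$ contains no outlier, which by hypothesis has probability $1-O(1/n)$; the complementary event contributes only $O(Kd)$ to the expectation after paying the worst-case $O(nKd)$. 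Conditional on $E$, I would decompose $\mathbb{P}(x_i\in\mathbf{C})=\sum_{t=1}^{K}\mathbb{P}(x_i=c_t)$ and, at each step, control the $K$-means++ sampling ratio $d^2(x_i,\mathbf{C}_{t-1})/\phi_c(\mathcal{X};\mathbf{C}_{t-1})$. The non-outlier inequality from Assumption~\ref{def:outlier} gives $d^2(x_i,c_{j(i)}^*)\le \epsilon_2\,\phi_c^*(\mathcal{C}_{j(i)}^*)/|\mathcal{C}_{j(i)}^*|$, the imbalance condition in Assumption~\ref{def:imbalance} gives $|\mathcal{C}_{j(i)}^*|\ge n/(K\epsilon_1)$, and one has $\phi_c(\mathcal{X};\mathbf{C}_{t-1})\ge \phi_c^*(\mathcal{X})\ge \phi_c^*(\mathcal{C}_{j(i)}^*)$; chained with triangle inequalities passing from $d(x_i,\mathbf{C}_{t-1})$ to $d(x_i,c_{j(i)}^*)$, this yields an $O(K\epsilon_1\epsilon_2/n)$ per-step bound that, summed over $K$ steps, gives the claimed pointwise estimate. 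A union bound over $R$ requests then delivers $p\le O(RK^2\epsilon_1\epsilon_2/n)$ and the expected cost $O(RK^3\epsilon_1\epsilon_2 d)$.

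The main obstacle is controlling the per-step ratio $d^2(x_i,\mathbf{C}_{t-1})/\phi_c(\mathcal{X};\mathbf{C}_{t-1})$ at iterations where the optimal cluster $\mathcal{C}_{j(i)}^*$ has not yet been ``covered'' by any centroid of $\mathbf{C}_{t-1}$, because in that regime the non-outlier inequality does not directly upper-bound $d^2(x_i,\mathbf{C}_{t-1})$. I would address this by splitting each step into a covered case (handled by the chain above) and an uncovered case, where a standard $K$-means++ potential argument in the spirit of Arthur--Vassilvitskii shows that the conditional probability of selecting any specific non-outlier from an uncovered cluster is still $O(1/|\mathcal{C}_{j(i)}^*|)$, which under imbalance gives the same $O(K\epsilon_1/n)$ contribution. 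The remaining pieces---accounting for the $O(1/n)$ outlier event, the $O(RK)$ check cost that is dominated by the resampling term, and the $O(nKd)$ complete-retraining cost from $K$ passes over $n$ points in $d$ dimensions---are routine, and taking the minimum with the trivial $O(nKd)$ ceiling closes the proof.
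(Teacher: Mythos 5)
Your overall skeleton coincides with the paper's: bound the probability $p$ that $\mathcal{X}_R$ intersects the centroid set (Lemma~\ref{lma:retrain_prob} in the paper), then write the expected cost as $p\cdot O(nKd)+(1-p)\cdot O(RK)$ and cap it by the trivial $O(nKd)$. The random-removal part, the union bound over $R$, and the outlier accounting ($O(1/n)\times O(nKd)=O(Kd)$, negligible) are exactly the paper's steps and are fine. The one place where your argument as written would not go through is the uncovered-cluster case of the adversarial bound. The claim that ``the conditional probability of selecting any specific non-outlier from an uncovered cluster is still $O(1/|\mathcal{C}_{j(i)}^*|)$'' is not a standard Arthur--Vassilvitskii fact and is false in general: conditioned on sampling from an uncovered cluster $A$, the selection is $D^2$-weighted \emph{within} $A$, and if centroids already chosen from neighboring clusters happen to sit very close to most points of $A$ while $x_i$ does not, essentially all of the conditional mass can concentrate on $x_i$, so the conditional probability can be $\Theta(1)$. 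Note also that your claimed contribution $O(K\epsilon_1/n)$ for this case drops the $\epsilon_2$ factor, which is a sign the bound is too strong: any correct pointwise bound for a specific non-outlier must carry $\epsilon_2$.

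The fix is close to what you intend, but it must be applied to the \emph{unconditional} per-step probability and it removes the need for the covered/uncovered split altogether. By the averaged triangle inequality over $A=\mathcal{C}_{j(i)}^*$, $d^2(x_i,\mathbf{C}_{t-1})\le \frac{2}{|A|}\sum_{a\in A}d^2(a,\mathbf{C}_{t-1})+\frac{2}{|A|}\sum_{a\in A}\|a-x_i\|^2$, and $\sum_{a\in A}\|a-x_i\|^2=\phi_c^*(A)+|A|\,\|x_i-c_{j(i)}^*\|^2\le(1+\epsilon_2)\phi_c^*(A)$ by Assumption~\ref{def:outlier}. Dividing by $\phi_c(\mathcal{X};\mathbf{C}_{t-1})$, which dominates both $\sum_{a\in A}d^2(a,\mathbf{C}_{t-1})$ and $\phi_c^*(\mathcal{X})\ge\phi_c^*(A)$, gives a per-step selection probability of at most $O(\epsilon_2/|A|)\le O(K\epsilon_1\epsilon_2/n)$ uniformly in $t$, whence the pointwise $O(K^2\epsilon_1\epsilon_2/n)$ after summing over $K$ steps. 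This is essentially a repackaging of the paper's own argument, which also avoids any case split: there, the sampling denominator is lower-bounded by a convex mixture $\frac{1}{5}\bigl(\phi_c^*(\mathcal{C}_i^*)+\sum_{y\in\mathcal{C}_i^*}\|c_y-c_i^*\|^2\bigr)$ and the inequality $\|x_i-c_i^*\|^2+\|c_y-c_i^*\|^2\ge\frac{1}{2}d^2(x_i,\mathbf{C}_k)$ (averaged over the many points $y$ of the same optimal cluster) plays the role of your triangle-inequality chain. With the uncovered-case step replaced by this unconditional averaging bound, your proof closes and matches the stated complexities.
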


\begin{remark*}
Due to the distance-based $K$-means++ initialization procedure, the existence of outliers in the dataset inevitably leads to higher retraining probability. This is the case since outliers are more likely to lie in the initial set of centroids. Hence, for analytical purposes, we assume in Theorem \ref{thm:expected_removal_time} that the probability of the data set containing at least one outlier is upper bounded by $O\left({1}/{n}\right)$. This is not an overly restrictive assumption as there exist many different approaches for
removing outliers before clustering~\cite{chawla2013k,gan2017k,hautamaki2005improving}, which effectively make the probability of outliers negligible.
\end{remark*}

\subsection{Unlearning Federated Clusters}\label{sec:ufc}
We describe next the complete unlearning algorithm for the new FC framework which uses Alg.~\ref{alg:mu_kpp} for client-level clustering. In the FL setting, data resides on client storage devices, and thus the basic assumption of federated unlearning is that the removal requests will only appear at the client side, and the removal set will not be known to other unaffected clients and the server. We consider two types of removal requests in the FC setting: removing $R$ points from one client $l$ (cross-silo, single-client removal), and removing all data points from $R$ clients $l_1,\ldots,l_R$ (cross-device, multi-client removal). For the case where multiple clients want to unlearn only a part of their data, the approach is similar to that of single-client removal and can be handled via simple union bounds.

The unlearning procedure is depicted in Alg.~\ref{alg:mu_fl}. For single-client data removal, the algorithm will first perform unlearning at the client (say, client $l$) following Alg.~\ref{alg:mu_kpp}. If the client's local clustering changes (i.e., client $l$ reruns the initialization), one will generate a new vector $q^{(l)}$ and send it to the server via SCMA. The server will rerun the clustering procedure with the new aggregated vector $q^\prime$ and generate a new set of global centroids $\mathbf{C}_s^\prime$. Note that other clients do not need to perform additional computations during this stage. For multi-client removals, we follow a similar strategy, except that no client needs to perform additional computations. Same as centralized unlearning described in Lemma~\ref{lma:mu_kpp}, we can show that Alg.~\ref{alg:mu_fl} is also an exact unlearning method.

\textbf{Removal time complexity.} For single-client removal, we know from Theorem~\ref{thm:expected_removal_time} that the expected removal time complexity of client $l$ is $\min\{O(RK^2d), O(n^{(l)}Kd)\}$ and $\min\{O(RK^3\epsilon_1\epsilon_2d), O(n^{(l)}Kd)\}$ for random and adversarial removals, respectively. $n^{(l)}$ denotes the number of data points on client $l$. Other clients do not require additional computations, since their centroids will not be affected by the removal requests. Meanwhile, the removal time complexity for the server is upper bounded by $O(K^2LTd)$, where $T$ is the maximum number of iterations of Lloyd's algorithm at the server before convergence. For multi-client removal, no client needs to perform additional computations, and the removal time complexity for the server equals $O((L-R)K^2Td)$.

\begin{wrapfigure}[24]{L}{0.5\textwidth}
\begin{minipage}{0.5\textwidth}
\vspace{-23pt}
\begin{algorithm}[H]
   \caption{Unlearning of Federated Clusters}
   \label{alg:mu_fl}
   \begin{algorithmic}[1]
    \STATE \textbf{input:} Dataset $\mathcal{X}$ distributed on $L$ clients ($\mathcal{X}^{(1)},\ldots,\mathcal{X}^{(L)}$), ($\mathbf{C}^{(l)}, \mathcal{X}_s, q, \mathbf{C}_s$) obtained by Algorithm~\ref{alg:fl_clustering} on $\mathcal{X}$, removal request set $\mathcal{X}_R^{(l)}$ for single-client removal or $\mathcal{L}_R$ for multi-client removal.
    \IF{single-client removal}
    \STATE Run Algorithm~\ref{alg:mu_kpp} on client $l$ and update $q^{(l)}$ if client $l$ has to perform retraining.
    \ELSE
    \STATE $q^{(l)}\leftarrow\mathbf{0}$ on client $l$, $\forall l\in\mathcal{L}_R$.
    \ENDIF
    \STATE Securely sum up $q^{(l)}$ at server by Algorithm~\ref{alg:fl_secure}, with the aggregated vector denoted as $q^\prime$.
    \IF{$q^\prime=q$}
    \STATE $\mathbf{C}_s^\prime\leftarrow \mathbf{C}_s$.
    \ELSE
    \STATE Generate $\mathcal{X}_s^\prime$ with $q^\prime$.
    \STATE Run full $K$-means++ at the server with $\mathcal{X}_s^\prime$ to obtain $\mathbf{C}_s^\prime$.
    \ENDIF
    \RETURN Client centroid sets $\mathbf{C}^{(l)\prime}$, server data $\mathcal{X}_s^\prime, q^\prime$ and centroids $\mathbf{C}_s^\prime$.
  \end{algorithmic}
\end{algorithm}
\end{minipage}
\end{wrapfigure}

\section{Experimental Results}\label{sec:exp}
To empirically characterize the trade-off between the efficiency of data removal and performance of our newly proposed FC method, we compare it with baseline methods on both synthetic and real datasets. Due to space limitations, more in-depth experiments and discussions are delegated to Appendix~\ref{app:exp}.

\textbf{Datasets and baselines.} We use one synthetic dataset generated by a Gaussian Mixture Model (Gaussian) and six real datasets (Celltype, Covtype, FEMNIST, Postures, TMI, TCGA) in our experiments. We preprocess the datasets such that the data distribution is non-i.i.d. across different clients. The symbol $K^\prime$ in Fig.~\ref{fig:main_exp} represents the maximum number of (true) clusters among clients, while $K$ represents the number of true clusters in the global dataset. A detailed description of the data statistics and the preprocessing procedure is available in Appendix~\ref{app:exp}. Since there is currently no off-the-shelf algorithm designed for unlearning federated clusters, we adapt DC-Kmeans (DC-KM) from~\citet{ginart2019making} to apply to our problem setting, and use complete retraining as the baseline comparison method. To evaluate FC performance on the complete dataset (before data removals), we also include the K-FED algorithm from~\citet{dennis2021heterogeneity} as the baseline method. In all plots, our Alg.~\ref{alg:fl_clustering} is referred to as MUFC. Note that in FL, clients are usually trained in parallel so that the estimated time complexity equals the sum of the longest processing time of a client and the processing time of the server.

\textbf{Clustering performance.} The clustering performance of all methods on the complete dataset is shown in the first row of Tab.~\ref{tab:performance}. The loss ratio is defined as ${\phi_f(\mathcal{X};\mathbf{C}_s)}/{\phi_c^*(\mathcal{X})}$\footnote{$\phi_c^*(X)$ is approximated by running $K$-means++ multiple times and selecting the smallest objective value.}, which is the metric used to evaluate the quality of the obtained clusters. For the seven datasets, MUFC offered the best performance on TMI and Celltype, datasets for which the numbers of data points in different clusters are highly imbalanced. This can be explained by pointing out an important difference between MUFC and K-FED/DC-KM: the quantized centroids sent by the clients may have non-unit weights, and MUFC is essentially performing weighted $K$-means++ at the server. In contrast, both K-FED and DC-KM assign equal unit weights to the client's centroids. Note that assigning weights to the client's centroids based on local clusterings not only enables a simple analysis of the scheme but also improves the empirical performance, especially for datasets with highly imbalanced cluster distributions. For all other datasets except Gaussian, MUFC obtained competitive clustering performance compared to K-FED/DC-KM. The main reason why DC-KM outperforms MUFC on Gaussian data is that all clusters are of the same size in this case. Also note that DC-KM runs full $K$-means++ clustering for each client while MUFC only performs initialization. Although running full $K$-means++ clustering at the client side can improve the empirical performance on certain datasets, it also greatly increases the computational complexity during training and the retraining probability during unlearning, which is shown in Fig.~\ref{fig:main_exp}. Nevertheless, we also compare the performance of MUFC with K-FED/DC-KM when running full $K$-means++ clustering on clients for MUFC in Appendix~\ref{app:exp}.

We also investigated the influence of $K^\prime$ and $\gamma$ on the clustering performance. Fig.~\ref{fig:main_exp}(a) shows that MUFC can obtain a lower loss ratio when $K^\prime < K$, indicating that data is non-i.i.d. distributed across clients. Fig.~\ref{fig:main_exp}(b) shows that the choice of $\gamma$ does not seem to have a strong influence on the clustering performance of Gaussian datasets, due to the fact that we use uniform sampling in Step 6 of Alg.~\ref{alg:fl_clustering} to generate the server dataset. Meanwhile, Fig.~\ref{fig:main_exp}(c) shows that $\gamma$ can have a significant influence on the clustering performance of real-world datasets, which agrees with our analysis in Theorem~\ref{thm:fl_performance_quantize}. The red vertical line in both figures indicates the default choice of $\gamma={1}/{\sqrt{n}}$, where $n$ stands for the number of total data points across clients.

\textbf{Unlearning performance.} Since K-FED does not support data removal, has high computational complexity, and its empirical clustering performance is worse than DC-KM (see Tab.~\ref{tab:performance}), we only compare the unlearning performance of MUFC with that of DC-KM. For simplicity, we consider removing one data point from a uniformly at random chosen client $l$ at each round of unlearning. The second row of Tab.~\ref{tab:performance} records the speed-up ratio w.r.t. complete retraining for one round of MUFC unlearning (Alg.~\ref{alg:mu_fl}) when the removed point does not lie in the centroid set selected at client $l$. Fig.~\ref{fig:main_exp}(e) shows the accumulated removal time on the TMI dataset for adversarial removals, which are simulated by removing the data points with the highest contribution to the current value of the objective function at each round, while Fig.~\ref{fig:main_exp}(f)-(l) shows the accumulated removal time on different datasets for random removals. The results show that MUFC maintains high unlearning efficiency compared to all other baseline approaches, and offers an average speed-up ratio of $84$x when compared to complete retraining for random removals across seven datasets. We also report the change in the loss ratio of MUFC during unlearning in Fig.~\ref{fig:main_exp}(d). The loss ratio remains nearly constant after each removal, indicating that our unlearning approach does not significantly degrade clustering performance. Similar conclusions hold for other tested datasets, as shown in Appendix~\ref{app:exp}.

\begin{table}[!t]
\setlength{\tabcolsep}{3pt}
\centering
\scriptsize
\caption{Clustering performance of different FC algorithms compared to centralized $K$-means++ clustering.}
\label{tab:performance}
\begin{tabular}{@{}cc|c|c|c|c|c|c|c@{}}
\toprule
                                      & & TMI & Celltype & Gaussian & TCGA & Postures & FEMNIST & Covtype         \\ \midrule
\multirow{3}{*}{Loss ratio}  & \multicolumn{1}{c|}{{MUFC}} & {$1.24\pm 0.10$} & {$1.14\pm 0.03$} & {$1.25\pm 0.02$} & {$1.18\pm 0.05$} & {$1.10\pm 0.01$} & {$1.20\pm 0.00$} & {$1.03\pm 0.02$} \\
                                      & \multicolumn{1}{c|}{{K-FED}}  & {$1.84\pm 0.07$} & {$1.72\pm 0.24$}  & {$1.25\pm 0.01$}  & {$1.56\pm 0.11$}  & {$1.13\pm 0.01$}  & {$1.21\pm 0.00$}  & {$1.60\pm0.01$} \\
                                      & \multicolumn{1}{c|}{{DC-KM}} & {$1.54\pm 0.13$}  & {$1.46\pm 0.01$} & {$1.02\pm 0.00$} & {$1.15\pm 0.02$} & {$1.03\pm 0.00$} & {$1.18\pm 0.00$} & {$1.03\pm 0.02$} \\ \midrule
\multicolumn{2}{c|}{\begin{tabular}[x]{@{}c@{}}Speed-up of MUFC\\ (if no retraining is performed) \end{tabular}} & {$151$x} & {$1535$x} & {$2074$x} & {$483$x} & {$613$x} & {$53$x} & {$267$x}\\ \bottomrule
\end{tabular}
\end{table}

\begin{figure}[bt]
    \centering
   \includegraphics[width=\linewidth]{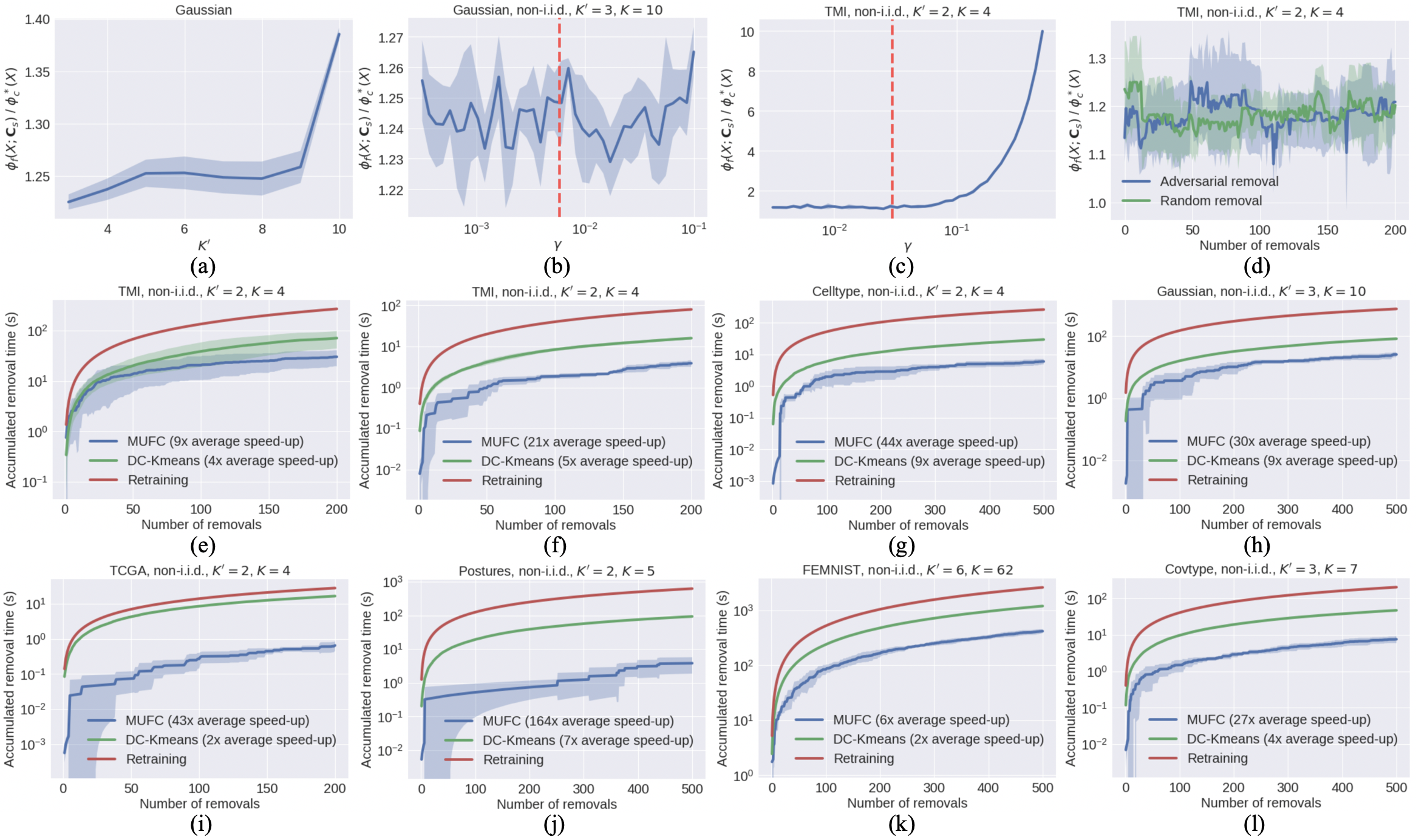}
    \vspace{-0.2in}
    \caption{The shaded areas represent the standard deviation of results from different trails. (a) Influence of data heterogeneity on the clustering performance of MUFC: $K^\prime$ represents the maximum number of (global) clusters covered by the data at the clients, while $K^\prime=10$ indicates that the data points are i.i.d. distributed across clients. (b)(c) Influence of the quantization step size $\gamma$ on the clustering performance of MUFC. The red vertical line indicates the default choice of $\gamma=1/\sqrt{n}$, where $n$ is the total number of data points across clients. (d) The change in the loss ratio after each round of unlearning. (e) The accumulated removal time for adversarial removals. (f)-(l) The accumulated removal time for random removals.}
    \vspace{-0.2in}
    \label{fig:main_exp}
\end{figure}

\clearpage

\section*{Ethics Statement}
The seven datasets used in our simulations are all publicly available. Among these datasets, TCGA and TMI contain potentially sensitive biological data and are downloaded after logging into the database. We adhered to all regulations when handling this anonymized data and will only release the data processing pipeline and data that is unrestricted at TCGA and TMI. Datasets that do not contain sensitive information can be downloaded directly from their open-source repositories.

\section*{Reproducibility Statement}
Our implementation is available at \url{https://github.com/thupchnsky/mufc}. Detailed instructions are included in the source code.

\section*{Acknowledgment}
This work was funded by NSF grants 1816913 and 1956384. The authors thank Eli Chien for the helpful discussion.

\clearpage
\bibliography{iclr2023_conference}

\begin{thebibliography}{68}
\providecommand{\natexlab}[1]{#1}
\providecommand{\url}[1]{\texttt{#1}}
\expandafter\ifx\csname urlstyle\endcsname\relax
  \providecommand{\doi}[1]{doi: #1}\else
  \providecommand{\doi}{doi: \begingroup \urlstyle{rm}\Url}\fi

\bibitem[Acharya \& Sun(2019)Acharya and Sun]{acharya2019communication}
Jayadev Acharya and Ziteng Sun.
\newblock Communication complexity in locally private distribution estimation
  and heavy hitters.
\newblock In \emph{International Conference on Machine Learning}, pp.\  51--60.
  PMLR, 2019.

\bibitem[Ailon et~al.(2009)Ailon, Jaiswal, and Monteleoni]{ailon2009streaming}
Nir Ailon, Ragesh Jaiswal, and Claire Monteleoni.
\newblock Streaming k-means approximation.
\newblock \emph{Advances in neural information processing systems}, 22, 2009.

\bibitem[Beguier et~al.(2020)Beguier, Andreux, and
  Tramel]{beguier2020efficient}
Constance Beguier, Mathieu Andreux, and Eric~W Tramel.
\newblock Efficient sparse secure aggregation for federated learning.
\newblock \emph{arXiv preprint arXiv:2007.14861}, 2020.

\bibitem[Bell et~al.(2020)Bell, Bonawitz, Gasc{\'o}n, Lepoint, and
  Raykova]{bell2020secure}
James~Henry Bell, Kallista~A Bonawitz, Adri{\`a} Gasc{\'o}n, Tancr{\`e}de
  Lepoint, and Mariana Raykova.
\newblock Secure single-server aggregation with (poly) logarithmic overhead.
\newblock In \emph{Proceedings of the 2020 ACM SIGSAC Conference on Computer
  and Communications Security}, pp.\  1253--1269, 2020.

\bibitem[Berlekamp(1968)]{Berlekamp1968AlgebraicCT}
Elwyn~R. Berlekamp.
\newblock Algebraic coding theory.
\newblock In \emph{McGraw-Hill series in systems science}, 1968.

\bibitem[Blackard \& Dean(1999)Blackard and Dean]{blackard1999comparative}
Jock~A Blackard and Denis~J Dean.
\newblock Comparative accuracies of artificial neural networks and discriminant
  analysis in predicting forest cover types from cartographic variables.
\newblock \emph{Computers and electronics in agriculture}, 24\penalty0
  (3):\penalty0 131--151, 1999.

\bibitem[Bonawitz et~al.(2021)Bonawitz, Kairouz, McMahan, and
  Ramage]{bonawitz2021federated}
Kallista Bonawitz, Peter Kairouz, Brendan McMahan, and Daniel Ramage.
\newblock Federated learning and privacy: Building privacy-preserving systems
  for machine learning and data science on decentralized data.
\newblock \emph{Queue}, 19\penalty0 (5):\penalty0 87--114, 2021.

\bibitem[Bonawitz et~al.(2017)Bonawitz, Ivanov, Kreuter, Marcedone, McMahan,
  Patel, Ramage, Segal, and Seth]{bonawitz2017practical}
Keith Bonawitz, Vladimir Ivanov, Ben Kreuter, Antonio Marcedone, H~Brendan
  McMahan, Sarvar Patel, Daniel Ramage, Aaron Segal, and Karn Seth.
\newblock Practical secure aggregation for privacy-preserving machine learning.
\newblock In \emph{proceedings of the 2017 ACM SIGSAC Conference on Computer
  and Communications Security}, pp.\  1175--1191, 2017.

\bibitem[Bourtoule et~al.(2021)Bourtoule, Chandrasekaran, Choquette-Choo, Jia,
  Travers, Zhang, Lie, and Papernot]{bourtoule2021machine}
Lucas Bourtoule, Varun Chandrasekaran, Christopher~A Choquette-Choo, Hengrui
  Jia, Adelin Travers, Baiwu Zhang, David Lie, and Nicolas Papernot.
\newblock Machine unlearning.
\newblock In \emph{2021 IEEE Symposium on Security and Privacy (SP)}, pp.\
  141--159. IEEE, 2021.

\bibitem[Caldas et~al.(2018)Caldas, Duddu, Wu, Li, Kone{\v{c}}n{\`y}, McMahan,
  Smith, and Talwalkar]{caldas2018leaf}
Sebastian Caldas, Sai Meher~Karthik Duddu, Peter Wu, Tian Li, Jakub
  Kone{\v{c}}n{\`y}, H~Brendan McMahan, Virginia Smith, and Ameet Talwalkar.
\newblock {LEAF: A Benchmark for Federated Settings}.
\newblock \emph{arXiv preprint arXiv:1812.01097}, 2018.

\bibitem[Cao \& Yang(2015)Cao and Yang]{cao2015towards}
Yinzhi Cao and Junfeng Yang.
\newblock Towards making systems forget with machine unlearning.
\newblock In \emph{2015 IEEE Symposium on Security and Privacy}, pp.\
  463--480. IEEE, 2015.

\bibitem[Chawla \& Gionis(2013)Chawla and Gionis]{chawla2013k}
Sanjay Chawla and Aristides Gionis.
\newblock k-means--: A unified approach to clustering and outlier detection.
\newblock In \emph{Proceedings of the 2013 SIAM international conference on
  data mining}, pp.\  189--197. SIAM, 2013.

\bibitem[Chen et~al.(2021)Chen, Zhang, Wang, Backes, Humbert, and
  Zhang]{chen2021graph}
Min Chen, Zhikun Zhang, Tianhao Wang, Michael Backes, Mathias Humbert, and Yang
  Zhang.
\newblock Graph unlearning.
\newblock \emph{arXiv preprint arXiv:2103.14991}, 2021.

\bibitem[Chen et~al.(2020)Chen, Kairouz, and Ozgur]{chen2020breaking}
Wei-Ning Chen, Peter Kairouz, and Ayfer Ozgur.
\newblock Breaking the communication-privacy-accuracy trilemma.
\newblock \emph{Advances in Neural Information Processing Systems},
  33:\penalty0 3312--3324, 2020.

\bibitem[Chen et~al.(2022)Chen, Choo, Kairouz, and Suresh]{chen2022fundamental}
Wei-Ning Chen, Christopher A~Choquette Choo, Peter Kairouz, and Ananda~Theertha
  Suresh.
\newblock The fundamental price of secure aggregation in differentially private
  federated learning.
\newblock In \emph{International Conference on Machine Learning}, pp.\
  3056--3089. PMLR, 2022.

\bibitem[Chien et~al.(2022)Chien, Pan, and Milenkovic]{chien2022certified}
Eli Chien, Chao Pan, and Olgica Milenkovic.
\newblock Certified graph unlearning.
\newblock In \emph{NeurIPS 2022 Workshop: New Frontiers in Graph Learning},
  2022.
\newblock URL \url{https://openreview.net/forum?id=wCxlGc9ZCwi}.

\bibitem[Chien et~al.(2023)Chien, Pan, and Milenkovic]{chien2023efficient}
Eli Chien, Chao Pan, and Olgica Milenkovic.
\newblock Efficient model updates for approximate unlearning of
  graph-structured data.
\newblock In \emph{The Eleventh International Conference on Learning
  Representations}, 2023.
\newblock URL \url{https://openreview.net/forum?id=fhcu4FBLciL}.

\bibitem[Chung et~al.(2022)Chung, Lee, and Ramchandran]{chung2022federated}
Jichan Chung, Kangwook Lee, and Kannan Ramchandran.
\newblock Federated unsupervised clustering with generative models.
\newblock In \emph{AAAI 2022 International Workshop on Trustable, Verifiable
  and Auditable Federated Learning}, 2022.

\bibitem[Cohen et~al.(2017)Cohen, Afshar, Tapson, and
  Van~Schaik]{cohen2017emnist}
Gregory Cohen, Saeed Afshar, Jonathan Tapson, and Andre Van~Schaik.
\newblock {EMNIST: Extending MNIST to handwritten letters}.
\newblock In \emph{2017 international joint conference on neural networks
  (IJCNN)}, pp.\  2921--2926. IEEE, 2017.

\bibitem[Deng et~al.(2009)Deng, Dong, Socher, Li, Li, and
  Fei-Fei]{deng2009imagenet}
Jia Deng, Wei Dong, Richard Socher, Li-Jia Li, Kai Li, and Li~Fei-Fei.
\newblock Imagenet: A large-scale hierarchical image database.
\newblock In \emph{2009 IEEE conference on computer vision and pattern
  recognition}, pp.\  248--255. Ieee, 2009.

\bibitem[Dennis et~al.(2021)Dennis, Li, and Smith]{dennis2021heterogeneity}
Don~Kurian Dennis, Tian Li, and Virginia Smith.
\newblock Heterogeneity for the win: One-shot federated clustering.
\newblock In \emph{International Conference on Machine Learning}, pp.\
  2611--2620. PMLR, 2021.

\bibitem[Eisinberg \& Fedele(2006)Eisinberg and Fedele]{eisinberg2006inversion}
Alfredo Eisinberg and Giuseppe Fedele.
\newblock On the inversion of the vandermonde matrix.
\newblock \emph{Applied mathematics and computation}, 174\penalty0
  (2):\penalty0 1384--1397, 2006.

\bibitem[Ergun et~al.(2021)Ergun, Sami, and Guler]{ergun2021sparsified}
Irem Ergun, Hasin~Us Sami, and Basak Guler.
\newblock Sparsified secure aggregation for privacy-preserving federated
  learning.
\newblock \emph{arXiv preprint arXiv:2112.12872}, 2021.

\bibitem[Fredrikson et~al.(2015)Fredrikson, Jha, and
  Ristenpart]{fredrikson2015model}
Matt Fredrikson, Somesh Jha, and Thomas Ristenpart.
\newblock Model inversion attacks that exploit confidence information and basic
  countermeasures.
\newblock In \emph{Proceedings of the 22nd ACM SIGSAC conference on computer
  and communications security}, pp.\  1322--1333, 2015.

\bibitem[Frikken(2007)]{frikken2007privacy}
Keith Frikken.
\newblock Privacy-preserving set union.
\newblock In \emph{International Conference on Applied Cryptography and Network
  Security}, pp.\  237--252. Springer, 2007.

\bibitem[Fu et~al.(2022)Fu, He, and Tao]{fu2022knowledge}
Shaopeng Fu, Fengxiang He, and Dacheng Tao.
\newblock Knowledge removal in sampling-based bayesian inference.
\newblock In \emph{International Conference on Learning Representations}, 2022.
\newblock URL \url{https://openreview.net/forum?id=dTqOcTUOQO}.

\bibitem[Gan \& Ng(2017)Gan and Ng]{gan2017k}
Guojun Gan and Michael Kwok-Po Ng.
\newblock K-means clustering with outlier removal.
\newblock \emph{Pattern Recognition Letters}, 90:\penalty0 8--14, 2017.

\bibitem[Gandikota et~al.(2021)Gandikota, Kane, Maity, and
  Mazumdar]{gandikota2021vqsgd}
Venkata Gandikota, Daniel Kane, Raj~Kumar Maity, and Arya Mazumdar.
\newblock vqsgd: Vector quantized stochastic gradient descent.
\newblock In \emph{International Conference on Artificial Intelligence and
  Statistics}, pp.\  2197--2205. PMLR, 2021.

\bibitem[Gardner et~al.(2014{\natexlab{a}})Gardner, Duncan, Kanno, and
  Selmic]{gardner20143d}
Andrew Gardner, Christian~A Duncan, Jinko Kanno, and Rastko Selmic.
\newblock {3D} hand posture recognition from small unlabeled point sets.
\newblock In \emph{2014 IEEE International Conference on Systems, Man, and
  Cybernetics (SMC)}, pp.\  164--169. IEEE, 2014{\natexlab{a}}.

\bibitem[Gardner et~al.(2014{\natexlab{b}})Gardner, Kanno, Duncan, and
  Selmic]{gardner2014measuring}
Andrew Gardner, Jinko Kanno, Christian~A Duncan, and Rastko Selmic.
\newblock Measuring distance between unordered sets of different sizes.
\newblock In \emph{Proceedings of the IEEE conference on computer vision and
  pattern recognition}, pp.\  137--143, 2014{\natexlab{b}}.

\bibitem[Geiping et~al.(2020)Geiping, Bauermeister, Dr{\"o}ge, and
  Moeller]{geiping2020inverting}
Jonas Geiping, Hartmut Bauermeister, Hannah Dr{\"o}ge, and Michael Moeller.
\newblock Inverting gradients-how easy is it to break privacy in federated
  learning?
\newblock \emph{Advances in Neural Information Processing Systems},
  33:\penalty0 16937--16947, 2020.

\bibitem[Ghosh et~al.(2020)Ghosh, Chung, Yin, and
  Ramchandran]{ghosh2020efficient}
Avishek Ghosh, Jichan Chung, Dong Yin, and Kannan Ramchandran.
\newblock An efficient framework for clustered federated learning.
\newblock \emph{Advances in Neural Information Processing Systems},
  33:\penalty0 19586--19597, 2020.

\bibitem[Ginart et~al.(2019)Ginart, Guan, Valiant, and Zou]{ginart2019making}
Antonio Ginart, Melody Guan, Gregory Valiant, and James~Y Zou.
\newblock {Making AI forget you: Data deletion in machine learning}.
\newblock \emph{Advances in neural information processing systems}, 32, 2019.

\bibitem[Golatkar et~al.(2020{\natexlab{a}})Golatkar, Achille, and
  Soatto]{golatkar2020eternal}
Aditya Golatkar, Alessandro Achille, and Stefano Soatto.
\newblock Eternal sunshine of the spotless net: Selective forgetting in deep
  networks.
\newblock In \emph{Proceedings of the IEEE/CVF Conference on Computer Vision
  and Pattern Recognition}, pp.\  9304--9312, 2020{\natexlab{a}}.

\bibitem[Golatkar et~al.(2020{\natexlab{b}})Golatkar, Achille, and
  Soatto]{golatkar2020forgetting}
Aditya Golatkar, Alessandro Achille, and Stefano Soatto.
\newblock Forgetting outside the box: Scrubbing deep networks of information
  accessible from input-output observations.
\newblock In \emph{European Conference on Computer Vision}, pp.\  383--398.
  Springer, 2020{\natexlab{b}}.

\bibitem[Guha et~al.(2003)Guha, Meyerson, Mishra, Motwani, and
  O'Callaghan]{guha2003clustering}
Sudipto Guha, Adam Meyerson, Nina Mishra, Rajeev Motwani, and Liadan
  O'Callaghan.
\newblock Clustering data streams: Theory and practice.
\newblock \emph{IEEE transactions on knowledge and data engineering},
  15\penalty0 (3):\penalty0 515--528, 2003.

\bibitem[Guo et~al.(2020)Guo, Goldstein, Hannun, and Van
  Der~Maaten]{pmlr-v119-guo20c}
Chuan Guo, Tom Goldstein, Awni Hannun, and Laurens Van Der~Maaten.
\newblock Certified data removal from machine learning models.
\newblock In Hal~Daumé III and Aarti Singh (eds.), \emph{Proceedings of the
  37th International Conference on Machine Learning}, volume 119 of
  \emph{Proceedings of Machine Learning Research}, pp.\  3832--3842. PMLR,
  13--18 Jul 2020.

\bibitem[Han et~al.(2020)Han, Wang, and Leung]{han2020adaptive}
Pengchao Han, Shiqiang Wang, and Kin~K Leung.
\newblock Adaptive gradient sparsification for efficient federated learning: An
  online learning approach.
\newblock In \emph{2020 IEEE 40th International Conference on Distributed
  Computing Systems (ICDCS)}, pp.\  300--310. IEEE, 2020.

\bibitem[Han et~al.(2018)Han, Wang, Zhou, Fei, Sun, Lai, Saadatpour, Zhou,
  Chen, Ye, et~al.]{han2018mapping}
Xiaoping Han, Renying Wang, Yincong Zhou, Lijiang Fei, Huiyu Sun, Shujing Lai,
  Assieh Saadatpour, Ziming Zhou, Haide Chen, Fang Ye, et~al.
\newblock Mapping the mouse cell atlas by microwell-seq.
\newblock \emph{Cell}, 172\penalty0 (5):\penalty0 1091--1107, 2018.

\bibitem[Hartigan \& Wong(1979)Hartigan and Wong]{hartigan1979algorithm}
John~A Hartigan and Manchek~A Wong.
\newblock Algorithm as 136: A k-means clustering algorithm.
\newblock \emph{Journal of the royal statistical society. series c (applied
  statistics)}, 28\penalty0 (1):\penalty0 100--108, 1979.

\bibitem[Hautam{\"a}ki et~al.(2005)Hautam{\"a}ki, Cherednichenko,
  K{\"a}rkk{\"a}inen, Kinnunen, and Fr{\"a}nti]{hautamaki2005improving}
Ville Hautam{\"a}ki, Svetlana Cherednichenko, Ismo K{\"a}rkk{\"a}inen, Tomi
  Kinnunen, and Pasi Fr{\"a}nti.
\newblock Improving k-means by outlier removal.
\newblock In \emph{Scandinavian conference on image analysis}, pp.\  978--987.
  Springer, 2005.

\bibitem[Hutter \& Zenklusen(2018)Hutter and Zenklusen]{hutter2018cancer}
Carolyn Hutter and Jean~Claude Zenklusen.
\newblock {The Cancer Genome Atlas: Creating Lasting Value beyond Its Data}.
\newblock \emph{Cell}, 173\penalty0 (2):\penalty0 283--285, 2018.

\bibitem[Kairouz et~al.(2021)Kairouz, McMahan, Avent, Bellet, Bennis, Bhagoji,
  Bonawitz, Charles, Cormode, Cummings, et~al.]{kairouz2021advances}
Peter Kairouz, H~Brendan McMahan, Brendan Avent, Aur{\'e}lien Bellet, Mehdi
  Bennis, Arjun~Nitin Bhagoji, Kallista Bonawitz, Zachary Charles, Graham
  Cormode, Rachel Cummings, et~al.
\newblock Advances and open problems in federated learning.
\newblock \emph{Foundations and Trends{\textregistered} in Machine Learning},
  14\penalty0 (1--2):\penalty0 1--210, 2021.

\bibitem[Kedlaya \& Umans(2011)Kedlaya and Umans]{kedlaya2011fast}
Kiran~S Kedlaya and Christopher Umans.
\newblock Fast polynomial factorization and modular composition.
\newblock \emph{SIAM Journal on Computing}, 40\penalty0 (6):\penalty0
  1767--1802, 2011.

\bibitem[Kissner \& Song(2005)Kissner and Song]{kissner2005privacy}
Lea Kissner and Dawn Song.
\newblock Privacy-preserving set operations.
\newblock In \emph{Annual International Cryptology Conference}, pp.\  241--257.
  Springer, 2005.

\bibitem[Lee et~al.(2017)Lee, Schmidt, and Wright]{lee2017improved}
Euiwoong Lee, Melanie Schmidt, and John Wright.
\newblock Improved and simplified inapproximability for k-means.
\newblock \emph{Information Processing Letters}, 120:\penalty0 40--43, 2017.

\bibitem[Li et~al.(2022)Li, Hou, Buyukates, and Avestimehr]{li2022secure}
Songze Li, Sizai Hou, Baturalp Buyukates, and Salman Avestimehr.
\newblock Secure federated clustering.
\newblock \emph{arXiv preprint arXiv:2205.15564}, 2022.

\bibitem[Liu et~al.(2021)Liu, Ma, Yang, Wang, and Liu]{liu2021federaser}
Gaoyang Liu, Xiaoqiang Ma, Yang Yang, Chen Wang, and Jiangchuan Liu.
\newblock Federaser: Enabling efficient client-level data removal from
  federated learning models.
\newblock In \emph{2021 IEEE/ACM 29th International Symposium on Quality of
  Service (IWQOS)}, pp.\  1--10. IEEE, 2021.

\bibitem[Lloyd(1982)]{lloyd1982least}
Stuart Lloyd.
\newblock Least squares quantization in pcm.
\newblock \emph{IEEE transactions on information theory}, 28\penalty0
  (2):\penalty0 129--137, 1982.

\bibitem[Mahajan et~al.(2012)Mahajan, Nimbhorkar, and
  Varadarajan]{mahajan2012planar}
Meena Mahajan, Prajakta Nimbhorkar, and Kasturi Varadarajan.
\newblock The planar k-means problem is np-hard.
\newblock \emph{Theoretical Computer Science}, 442:\penalty0 13--21, 2012.

\bibitem[Mansour et~al.(2020)Mansour, Mohri, Ro, and Suresh]{mansour2020three}
Yishay Mansour, Mehryar Mohri, Jae Ro, and Ananda~Theertha Suresh.
\newblock Three approaches for personalization with applications to federated
  learning.
\newblock \emph{arXiv preprint arXiv:2002.10619}, 2020.

\bibitem[Massey(1969)]{massey1969shift}
James Massey.
\newblock Shift-register synthesis and bch decoding.
\newblock \emph{IEEE transactions on Information Theory}, 15\penalty0
  (1):\penalty0 122--127, 1969.

\bibitem[McMahan et~al.(2017)McMahan, Moore, Ramage, Hampson, and
  y~Arcas]{mcmahan2017communication}
Brendan McMahan, Eider Moore, Daniel Ramage, Seth Hampson, and Blaise~Aguera
  y~Arcas.
\newblock Communication-efficient learning of deep networks from decentralized
  data.
\newblock In \emph{Artificial intelligence and statistics}, pp.\  1273--1282.
  PMLR, 2017.

\bibitem[Pan et~al.(2023)Pan, Chien, and Milenkovic]{pan2023unlearning}
Chao Pan, Eli Chien, and Olgica Milenkovic.
\newblock Unlearning graph classifiers with limited data resources.
\newblock In \emph{The Web Conference}, 2023.

\bibitem[Peterson et~al.(2009)Peterson, Garges, Giovanni, McInnes, Wang,
  Schloss, Bonazzi, McEwen, Wetterstrand, Deal, et~al.]{peterson2009nih}
Jane Peterson, Susan Garges, Maria Giovanni, Pamela McInnes, Lu~Wang, Jeffery~A
  Schloss, Vivien Bonazzi, Jean~E McEwen, Kris~A Wetterstrand, Carolyn Deal,
  et~al.
\newblock {The NIH Human Microbiome Project}.
\newblock \emph{Genome research}, 19\penalty0 (12):\penalty0 2317--2323, 2009.

\bibitem[Reed \& Solomon(1960)Reed and Solomon]{reed1960polynomial}
Irving~S Reed and Gustave Solomon.
\newblock Polynomial codes over certain finite fields.
\newblock \emph{Journal of the society for industrial and applied mathematics},
  8\penalty0 (2):\penalty0 300--304, 1960.

\bibitem[Sattler et~al.(2020)Sattler, M{\"u}ller, and
  Samek]{sattler2020clustered}
Felix Sattler, Klaus-Robert M{\"u}ller, and Wojciech Samek.
\newblock Clustered federated learning: Model-agnostic distributed multitask
  optimization under privacy constraints.
\newblock \emph{IEEE transactions on neural networks and learning systems},
  32\penalty0 (8):\penalty0 3710--3722, 2020.

\bibitem[Sekhari et~al.(2021)Sekhari, Acharya, Kamath, and
  Suresh]{sekhari2021remember}
Ayush Sekhari, Jayadev Acharya, Gautam Kamath, and Ananda~Theertha Suresh.
\newblock Remember what you want to forget: Algorithms for machine unlearning.
\newblock \emph{Advances in Neural Information Processing Systems},
  34:\penalty0 18075--18086, 2021.

\bibitem[Seo et~al.(2012)Seo, Cheon, and Katz]{seo2012constant}
Jae~Hong Seo, Jung~Hee Cheon, and Jonathan Katz.
\newblock Constant-round multi-party private set union using reversed laurent
  series.
\newblock In \emph{International Workshop on Public Key Cryptography}, pp.\
  398--412. Springer, 2012.

\bibitem[So et~al.(2022)So, Nolet, Yang, Li, Yu, E~Ali, Guler, and
  Avestimehr]{so2022lightsecagg}
Jinhyun So, Corey~J Nolet, Chien-Sheng Yang, Songze Li, Qian Yu, Ramy E~Ali,
  Basak Guler, and Salman Avestimehr.
\newblock Lightsecagg: a lightweight and versatile design for secure
  aggregation in federated learning.
\newblock \emph{Proceedings of Machine Learning and Systems}, 4:\penalty0
  694--720, 2022.

\bibitem[Sudlow et~al.(2015)Sudlow, Gallacher, Allen, Beral, Burton, Danesh,
  Downey, Elliott, Green, Landray, et~al.]{sudlow2015uk}
Cathie Sudlow, John Gallacher, Naomi Allen, Valerie Beral, Paul Burton, John
  Danesh, Paul Downey, Paul Elliott, Jane Green, Martin Landray, et~al.
\newblock Uk biobank: an open access resource for identifying the causes of a
  wide range of complex diseases of middle and old age.
\newblock \emph{PLoS medicine}, 12\penalty0 (3):\penalty0 e1001779, 2015.

\bibitem[Thomee et~al.(2016)Thomee, Shamma, Friedland, Elizalde, Ni, Poland,
  Borth, and Li]{thomee2016yfcc100m}
Bart Thomee, David~A Shamma, Gerald Friedland, Benjamin Elizalde, Karl Ni,
  Douglas Poland, Damian Borth, and Li-Jia Li.
\newblock Yfcc100m: The new data in multimedia research.
\newblock \emph{Communications of the ACM}, 59\penalty0 (2):\penalty0 64--73,
  2016.

\bibitem[Vassilvitskii \& Arthur(2006)Vassilvitskii and
  Arthur]{vassilvitskii2006k}
Sergei Vassilvitskii and David Arthur.
\newblock k-means++: The advantages of careful seeding.
\newblock In \emph{Proceedings of the eighteenth annual ACM-SIAM symposium on
  Discrete algorithms}, pp.\  1027--1035, 2006.

\bibitem[Veale et~al.(2018)Veale, Binns, and Edwards]{veale2018algorithms}
Michael Veale, Reuben Binns, and Lilian Edwards.
\newblock Algorithms that remember: model inversion attacks and data protection
  law.
\newblock \emph{Philosophical Transactions of the Royal Society A:
  Mathematical, Physical and Engineering Sciences}, 376\penalty0
  (2133):\penalty0 20180083, 2018.

\bibitem[Wang et~al.(2021)Wang, Charles, Xu, Joshi, McMahan, Al-Shedivat,
  Andrew, Avestimehr, Daly, Data, et~al.]{wang2021field}
Jianyu Wang, Zachary Charles, Zheng Xu, Gauri Joshi, H~Brendan McMahan, Maruan
  Al-Shedivat, Galen Andrew, Salman Avestimehr, Katharine Daly, Deepesh Data,
  et~al.
\newblock A field guide to federated optimization.
\newblock \emph{arXiv preprint arXiv:2107.06917}, 2021.

\bibitem[Wang et~al.(2022)Wang, Guo, Xie, and Qi]{wang2022federated}
Junxiao Wang, Song Guo, Xin Xie, and Heng Qi.
\newblock Federated unlearning via class-discriminative pruning.
\newblock In \emph{Proceedings of the ACM Web Conference 2022}, pp.\  622--632,
  2022.

\bibitem[Wu et~al.(2022)Wu, Zhu, and Mitra]{wu2022federated}
Chen Wu, Sencun Zhu, and Prasenjit Mitra.
\newblock Federated unlearning with knowledge distillation.
\newblock \emph{arXiv preprint arXiv:2201.09441}, 2022.

\bibitem[Zhu et~al.(2019)Zhu, Liu, and Han]{zhu2019deep}
Ligeng Zhu, Zhijian Liu, and Song Han.
\newblock Deep leakage from gradients.
\newblock \emph{Advances in neural information processing systems}, 32, 2019.

\end{thebibliography}
\bibliographystyle{iclr2023_conference}

\clearpage
\appendix

\section{Related Works}\label{app:related}
\textbf{Federated clustering.} The idea of FC is to perform clustering using data that resides at different edge devices. It is closely related to clustered FL~\citep{sattler2020clustered}, whose goal is to learn several global models simultaneously, based on the cluster structure of the dataset, as well as personalization according to the cluster assignments of client data in FL~\citep{mansour2020three}. One difference between FC and distributed clustering~\citep{guha2003clustering,ailon2009streaming} is the assumption of data heterogeneity across different clients. Recent works~\citep{ghosh2020efficient,dennis2021heterogeneity,chung2022federated} exploit the non-i.i.d nature of client data to improve the performance of some learners. Another difference pertains to data privacy. Most previous methods were centered around the idea of sending exact~\citep{dennis2021heterogeneity} or quantized client (local) centroids~\citep{ginart2019making} to the server, which may not be considered private as it leaks the data statistics or cluster information of all the clients. To avoid sending exact centroids,~\citet{li2022secure} proposes sending distances between data points and centroids to the server without revealing the membership of data points to any of the parties involved. Note that there is currently no formal definition of computational or information-theoretic secrecy/privacy for FC problems, making it hard to compare methods addressing different aspects of FL. Our method introduces a simple-to-unlearn clustering process and new privacy mechanism that is intuitively appealing as it involves communicating obfuscated point counts of the clients to the server. 

\textbf{Sparse secure aggregation.} Sparse secure aggregation aims to securely aggregate local models in a communication-efficient fashion for the case that the local models are high-dimensional but sparse. Existing works on sparse secure aggregation~\citep{beguier2020efficient, ergun2021sparsified} either have a communication complexity that is linear in the model dimension, or they can only generate an approximation of the aggregated model based on certain sparsification procedures~\citep{han2020adaptive}. In comparison, our SCMA scheme can securely recover the exact sum of the input sparse models with a communication complexity that is logarithmic in the model dimension.

\textbf{Private set union.} The private set union~\citep{kissner2005privacy,frikken2007privacy,seo2012constant} is a related but different problem compared to sparse secure aggregation. It requires multiple parties to communicate with each other to securely compute the union of their sets. In SCMA we aggregate multisets, which include the frequency of each element that is not considered in the private set union problem. In addition, our scheme includes only one round of communication from the clients to the server, while there is no server in the private set union problem but multi-round client to client communication is needed.

\textbf{Machine unlearning.} For \emph{centralized} machine unlearning problems, two types of unlearning requirements were proposed in previous works: exact unlearning and approximate unlearning. For exact unlearning, the unlearned model is required to perform identically as a completely retrained model. To achieve this,~\citet{cao2015towards} introduced distributed learners,~\citet{bourtoule2021machine} proposed sharding-based methods,~\citet{ginart2019making} used quantization to eliminate the effect of removed data in clustering problems, and~\citet{chen2021graph} applied sharding-based methods to Graph Neural Networks. For approximate unlearning, the ``differences'' in behavior between the unlearned model and the completely retrained model should be appropriately bounded, similarly to what is done in the context of differential privacy. Following this latter direction,~\citet{pmlr-v119-guo20c} introduced the inverse Newton update for linear models,~\citet{sekhari2021remember} studied the generalization performance of approximately unlearned models,~\citet{fu2022knowledge} proposed an MCMC unlearning algorithm for sampling-based Bayesian inference,~\citet{golatkar2020eternal,golatkar2020forgetting} designed model update mechanisms for deep neural networks based on Fisher Information and Neural Tangent Kernel, while~\citet{chien2022certified,chien2023efficient,pan2023unlearning} extended the analysis to Graph Neural Networks. A limited number of recent works also investigated data removal in the FL settings:~\citet{liu2021federaser} proposed to use fewer iterations during retraining for federated unlearning,~\citet{wu2022federated} introduced Knowledge Distillation into the unlearning procedure to eliminate the effect of data points requested for removal, and~\cite{wang2022federated} considered removing all data from one particular class via inspection of the internal influence of each channel in Convolutional Neural Networks. These federated unlearning methods are (mostly) empirical and do not come with theoretical guarantees for model performance after removal and/or for the unlearning efficiency. In contrast, our proposed FC framework not only enables efficient data removal in practice, but also provides theoretical guarantees for the unlearned model performance and for the expected time complexity of the unlearning procedure.

\section{$K$-means++ Initialization}\label{app:kpp_init}
The $K$-means problem is NP-hard even for $K=2$, and when the points lie in a two-dimensional Euclidean space~\citep{mahajan2012planar}. Heuristic algorithms for solving the problem, including Lloyd's~\citep{lloyd1982least} and Hartigan's method~\citep{hartigan1979algorithm}, are not guaranteed to obtain the global optimal solution unless further assumptions are made on the point and cluster structures~\citep{lee2017improved}. Although obtaining the exact optimal solution for the $K$-means problem is difficult, there are many methods that can obtain quality approximations for the optimal centroids. For example, a randomized initialization algorithm ($K$-means++) was introduced in~\citet{vassilvitskii2006k} and the expected objective value after initialization is a $(\log K)$-approximation to the optimal objective ($\mathbb{E}(\phi)\leq (8\ln K+16)\phi^*$). $K$-means++ initialization works as follows: initially, the centroid set $\mathbf{C}$ is assumed to be empty. Then, a point is sampled uniformly at random from $X$ for the first centroid and added to $\mathbf{C}$. For the following $K-1$ rounds, a point $x$ from $X$ is sampled with probability $d^2(x,\mathbf{C})/\phi_c(\mathcal{X};\mathbf{C})$ for the new centroid and added to $\mathbf{C}$. Here, $d(x,\mathbf{C})$ denotes the minimum $\ell_2$ distance between $x$ and the centroids in $\mathbf{C}$ chosen so far. After the initialization step, we arrive at $K$ initial centroids in $\mathbf{C}$ used for running Lloyd's algorithm.

\section{Complexity Analysis of Algorithm~\ref{alg:fl_clustering}}\label{app:complexity}



\textbf{Computational complexity of client-side clustering.} Client-side clustering involves running $K$-means++ initialization procedure, which is of complexity $O(nKd)$.

\textbf{Computational complexity of server-side clustering.} Server-side clustering involves running $K$-means++ initialization procedure followed by Lloyd's algorithm with $T$ iterations, which is of complexity $O(K^2LTd)$.

\textbf{Computational complexity of SCMA at the client end.} The computation of $S_i^{(l)}$ on client $l$ requires at most $O(K\log i)$ multiplications over $\mathbb{F}_p,$ $i\in[2KL]$. The total computational complexity equals $O(K^2L\log (KL))$ multiplication and addition operations over $\mathbb{F}_p$.

\textbf{Computational complexity of SCMA at the server end.}
The computational complexity at the server is dominated by the complexity of the Berlekamp-Massey decoding algorithm~\citep{Berlekamp1968AlgebraicCT,massey1969shift}, factorizing the polynomial $g(x)$ over $\mathbb{F}_p$~\citep{kedlaya2011fast}, and solving the linear equations $S_i=\sum_{l\in[L]}S^{(l)}_i=\sum_{j:q_j\ne 0}q_j\cdot j^{i-1}$ with known $j$, $q_j\ne 0$. The complexity of Berlekamp-Massey decoding over $\mathbb{F}_p$ is $O(K^2L^2)$. The complexity of factorizing a polynomial $g(x)$ over $\mathbb{F}_p$ using the algorithm in~\cite{kedlaya2011fast} is $O((KL)^{1.5}\log p+ KL\log^2 p)$ operations over $\mathbb{F}_p$. The complexity of solving for $S_i=\sum_{l\in[L]}S^{(l)}_i$ equals that of finding the inverse of a Vandermonde matrix, which takes $O(K^2L^2)$ operations over $\mathbb{F}_p$~\citep{eisinberg2006inversion}. Hence, the total computational complexity at the server side is $\max\{O(K^2L^2), O((KL)^{1.5}\log p+ KL\log^2 p)\}$ operations over $\mathbb{F}_p$.

\textbf{Communication complexity of SCMA at the client end.}
Since each $S^{(l)}_i$ can be represented by $\lceil\log p\rceil$ bits, the information $\{S_i^{(l)}\}_{i\in[2KL]}$ sent by each client $l$ can be represented by $2KL \lceil\log p\rceil\le\max\{2KL\log n,2KLd\log B\}+1$ bits. Note that there are at most $\sum_{k\in[KL]}\binom{B^d}{k}\binom{n}{k-1}$ $q$-ary vectors of length $B^d$. Hence, the cost for communicating $q^{(l)}$ from the client to server $l$ is at least $\log \big(\sum_{k\in[KL]}\binom{B^d}{k}\binom{n}{k-1} \big)=\max\{O(KL\log n),O(KLd\log B)\}$ bits, which implies that our scheme is order-optimal w.r.t. the communication cost. Note that following standard practice in the area, we do not take into account the complexity of noise generation in secure model aggregation, as it can be done offline and independently of the Reed-Solomon encoding procedure. 

\section{Proof of Theorem~\ref{thm:fl_performance_quantize}}\label{app:fl_performance_quantize}
\begin{proof}
We first consider the case where no quantization is performed (Algorithm~\ref{alg:fl_nonsecure_clustering}). The performance guarantees for the federated objective value in this setting are provided in Lemma~\ref{lma:fl_performance_analog}.

\begin{lemma}\label{lma:fl_performance_analog}
Suppose that the entire data set across clients is denoted by $\mathcal{X}$, and the set of server centroids returned by Algorithm~\ref{alg:fl_nonsecure_clustering} is $\mathbf{C}_s$. Then we have  
$$
\mathbb{E}\left(\phi_f(\mathcal{X};\mathbf{C}_s)\right) < O(\log^2 K)\cdot \phi_c^*(\mathcal{X}).
$$
\end{lemma}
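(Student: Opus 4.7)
The plan is to bound $\mathbb{E}[\phi_f(\mathcal{X};\mathbf{C}_s)]$ via a two-stage decomposition that mirrors the two rounds of $K$-means++ performed in Algorithm~\ref{alg:fl_nonsecure_clustering} (once at each client, once at the server), so that each round contributes a factor of $O(\log K)$. For every point $x_i^{(l)}$, let $c^{(l)}(x_i^{(l)})\in\mathbf{C}^{(l)}$ denote its nearest local centroid and $c_s(c_k^{(l)})\in\mathbf{C}_s$ the nearest server centroid to $c_k^{(l)}$; by Definition~\ref{def:induced_cluster} the induced global centroid of $x_i^{(l)}$ is $c_s(c^{(l)}(x_i^{(l)}))$. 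Applying $\|a+b\|^2\le 2\|a\|^2+2\|b\|^2$ pointwise and summing over all points yields
$$\phi_f(\mathcal{X};\mathbf{C}_s)\le 2\sum_{l=1}^L\phi_c(\mathcal{X}^{(l)};\mathbf{C}^{(l)})+2\Phi_s,\qquad \Phi_s:=\sum_{l,k}|\mathcal{C}_k^{(l)}|\,\|c_k^{(l)}-c_s(c_k^{(l)})\|^2.$$

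For the local term, the $K$-means++ initialization guarantee at each client gives $\mathbb{E}[\phi_c(\mathcal{X}^{(l)};\mathbf{C}^{(l)})]\le 8(\ln K+2)\phi_c^*(\mathcal{X}^{(l)})$. Since $\mathbf{C}^*$ is a feasible centroid set for each client, $\sum_l\phi_c^*(\mathcal{X}^{(l)})\le\sum_l\phi_c(\mathcal{X}^{(l)};\mathbf{C}^*)=\phi_c^*(\mathcal{X})$, and therefore $\mathbb{E}\bigl[\sum_l\phi_c(\mathcal{X}^{(l)};\mathbf{C}^{(l)})\bigr]=O(\log K)\,\phi_c^*(\mathcal{X})$. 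For the server term, the server performs weighted $K$-means++ on the points $\{c_k^{(l)}\}_{l,k}$ with weights $|\mathcal{C}_k^{(l)}|$, so conditionally on the local clusterings, $\mathbb{E}[\Phi_s\mid\{\mathbf{C}^{(l)}\}_l]\le O(\log K)\,\Phi_s^*$, where $\Phi_s^*$ is the optimal weighted $K$-means cost over these weighted points.

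The crux is to upper bound $\Phi_s^*$ by the local error plus $\phi_c^*(\mathcal{X})$. I use $\mathbf{C}^*$ as a feasible set of server centroids: for any $x_i^{(l)}\in\mathcal{C}_k^{(l)}$,
$$\|c_k^{(l)}-c^*(c_k^{(l)})\|^2\le\|c_k^{(l)}-c^*(x_i^{(l)})\|^2\le 2\|c_k^{(l)}-x_i^{(l)}\|^2+2\|x_i^{(l)}-c^*(x_i^{(l)})\|^2.$$
Averaging this inequality over all $x_i^{(l)}\in\mathcal{C}_k^{(l)}$, multiplying by $|\mathcal{C}_k^{(l)}|$ so that the weight exactly cancels the $1/|\mathcal{C}_k^{(l)}|$ from the average, and summing over $l,k$ yields $\Phi_s^*\le 2\sum_l\phi_c(\mathcal{X}^{(l)};\mathbf{C}^{(l)})+2\phi_c^*(\mathcal{X})$. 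By the tower rule, $\mathbb{E}[\Phi_s]\le O(\log K)\,\mathbb{E}[\Phi_s^*]\le O(\log^2 K)\,\phi_c^*(\mathcal{X})$, and plugging this into the Step~1 decomposition gives $\mathbb{E}[\phi_f(\mathcal{X};\mathbf{C}_s)]\le O(\log^2 K)\,\phi_c^*(\mathcal{X})$.

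The main obstacle is the server-side control of $\Phi_s^*$: the server centroids are trained on the local centroids rather than on $\mathcal{X}$, so one must translate the ``centroid-of-centroids'' cost back into the centralized objective without incurring factors such as $n/K$ or $L$. The averaging trick above is what makes this work, since multiplying each pointwise triangle inequality by the weight $|\mathcal{C}_k^{(l)}|$ turns the right-hand side into the sum of within-local-cluster squared distances, which is precisely $\sum_l\phi_c(\mathcal{X}^{(l)};\mathbf{C}^{(l)})$ and has already been bounded. A minor technical point is verifying that the $K$-means++ approximation extends to the weighted setting, which follows by treating a weight-$w$ point as $w$ coincident copies in the standard analysis.
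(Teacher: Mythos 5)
Your proof is correct and follows essentially the same route as the paper's: decompose $\phi_f$ into the local clustering cost plus the centroid-to-server-centroid cost via $\|a+b\|^2\le 2\|a\|^2+2\|b\|^2$, apply the $K$-means++ $O(\log K)$ guarantee once at the clients and once at the (weighted) server problem, and control the server's optimal cost by using the centralized optimum $\mathbf{C}^*$ as a feasible centroid set together with the same triangle-inequality averaging over each local cluster. The only cosmetic difference is that you phrase the server step with explicit weights and the tower rule, whereas the paper encodes the weights as replicated rows of $X_s$ and keeps expectations throughout; these are equivalent.
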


\begin{proof}
Let $\mathbf{C}^*$ denote the optimal set of centroids that minimize the objective~(\ref{eq:central_kmeans}) for the entire dataset $X\in\mathbb{R}^{n\times d}$, let $C^*\in\mathbb{R}^{n\times d}$ be the matrix that records the closest centroid in $\mathbf{C}^*$ to each data point, $\mathbf{C}_s$ the set of centroids returned by Alg.~\ref{alg:fl_clustering}, and $C_s\in\mathbb{R}^{n\times d}$ the matrix that records the corresponding centroid in $\mathbf{C}_s$ for each data point based on the global clustering defined in Definition~\ref{def:induced_cluster}. Since we perform $K$-means++ initialization on each client dataset, for client $l$ it holds 
\begin{align}\label{eq:client_performance}
    \mathbb{E}\left(\|X^{(l)}-C^{(l)}\|_F^2\right) &\leq (8\ln K+16) \|X^{(l)}-C_*^{(l)}\|_F^2,\quad \forall l\in[L] \notag\\
    &\leq (8\ln K+16) \|X^{(l)}-C^{*,(l)}\|_F^2
\end{align}
where $C^{(l)}\in\mathbb{R}^{n^{(l)}\times d}$ records the closest centroid in $\mathbf{C}^{(l)}$ to each data point $x_i$ in $\mathcal{X}^{(l)}$, $C_*^{(l)}$ is the optimal solution that can minimize the local $K$-means objective for client $l$, and $C^{*,(l)}$ denotes the row in $C^*$ that corresponds to client $l$. Summing up~(\ref{eq:client_performance}) over all clients gives
\begin{align}\label{eq:client_performance_sum}
    \mathbb{E}\left(\sum_{l=1}^L\|X^{(l)}-C^{(l)}\|_F^2\right) \leq (8\ln K+16) \sum_{l=1}^L\|X^{(l)}-C^{*,(l)}\|_F^2.
\end{align}
At the server side the client centroids are reorganized into a matrix $X_s\in\mathbb{R}^{n\times d}$. The weights of the client centroids are converted to replicates of rows in $X_s$. Since we perform full $K$-means++ clustering at the server, it follows that 
\begin{align}\label{eq:server_performance}
    \mathbb{E}\left(\|X_s-C_s\|_F^2\right) &= \mathbb{E}\left(\sum_{l=1}^L\|C^{(l)}-C_s^{(l)}\|_F^2\right)\notag \\
    &\stackrel{(a)}{\leq} (8\ln K+16)\sum_{l=1}^L\mathbb{E}\left(\|C^{(l)}-C_{s,*}^{(l)}\|_F^2\right) \notag \\
    &\leq (8\ln K+16)\sum_{l=1}^L\mathbb{E}\left(\|C^{(l)}-C^{*,(l)}\|_F^2\right),
\end{align}
where $C_{s,*}\in\mathbb{R}^{n\times d}$ is the optimal solution that minimizes the $K$-means objective at the server. It is worth pointing out that $C_{s,*}$ is different from $C^*$, as they are optimal solutions for different optimization objectives. Note that we still keep the expectation on RHS for $(a)$. The randomness comes from the fact that $C^{(l)}$ is obtained by $K$-means++ initialization, which is a probabilistic procedure.

Combining~(\ref{eq:client_performance_sum}) and~(\ref{eq:server_performance}) results in
\begin{align}\label{eq:lma_bound}
    \mathbb{E}\left(\phi_f(\mathcal{X};\mathbf{C}_s)\right)&=\mathbb{E}\left(\sum_{l=1}^L\|X^{(l)}-C_s^{(l)}\|_F^2\right) \notag \\
    &\leq 2\cdot\mathbb{E}\left[\sum_{l=1}^L\left(\|X^{(l)}-C^{(l)}\|_F^2+\|C^{(l)}-C_s^{(l)}\|_F^2\right)\right] \notag \\
    &\leq (16\ln K+32)\sum_{l=1}^L\left[\|X^{(l)}-C^{*,(l)}\|_F^2+\mathbb{E}\left(\|C^{(l)}-C^{*,(l)}\|_F^2\right)\right].
\end{align}
For $\mathbb{E}\left(\|C^{(l)}-C^{*,(l)}\|_F^2\right)$, we have
\begin{align}\label{eq:decompose}
    \mathbb{E}\left(\|C^{(l)}-C^{*,(l)}\|_F^2\right) &\leq 2\cdot \mathbb{E}\left(\|C^{(l)}-X^{(l)}\|_F^2+\|X^{(l)}-C^{*,(l)}\|_F^2\right) \notag\\
    &=2\cdot\|X^{(l)}-C^{*,(l)}\|_F^2 + 2\cdot \mathbb{E}\left(\|C^{(l)}-X^{(l)}\|_F^2\right).
\end{align}
Replacing~(\ref{eq:decompose}) into~(\ref{eq:lma_bound}) shows that $\mathbb{E}\left(\phi_f(\mathcal{X};\mathbf{C}_s)\right) < O(\log^2 K)\cdot \phi_c^*(X)$, which completes the proof.

If we are only concerned with the performance of non-outlier points over the entire dataset, we can upper bound the term $\mathbb{E}\left(\sum_{l=1}^L\|C^{(l)}-C^{*,(l)}\|_F^2\right)$ by
\begin{equation}\label{eq:valid_point_upper_bound}
    \mathbb{E}\left(\sum_{l=1}^L\|C^{(l)}-C^{*,(l)}\|_F^2\right) \leq \epsilon_2 \phi_c^*(\mathcal{X}).
\end{equation}
Here, we used the fact that rows of $C^{(l)}$ are all real data points sampled by the $K$-means++ initialization procedure. 
For each data point $x_i$, it holds that $\|x_i-c_i^*\|^2|\mathcal{C}_i^*|\leq \epsilon_2\phi_c^*(\mathcal{C}_i^*)$, where $x_i\in\mathcal{C}_i^*$. In this case, we arrive at $\mathbb{E}\left(\phi_f(\mathcal{X}_t;\mathbf{C}_s)\right) < O(\epsilon_2\log K)\cdot \phi_c^*(\mathcal{X}_t)$, where $\mathcal{X}_t$ corresponds to all non-outlier points.
\end{proof}

\begin{remark*}
In Theorem 4 of~\citet{guha2003clustering} the authors show that for the distributed $K$-median problem, if we use a $O(b)$-approximation algorithm (i.e., $\phi\leq O(b)\cdot\phi^*$) for the $K$-median problem with subdatasets on distributed machines, and use a $O(c)$-approximation algorithm for the $K$-median problem on the centralized machine, the overall distributed algorithm achieves effectively a $O(bc)$-approximation of the optimal solution to the centralized $K$-median problem. This is consistent with our observation that Alg.~\ref{alg:fl_nonsecure_clustering} can offer in expectation a $O(\log^2 K)$-approximation to the optimal solution of the centralized $K$-means problem, since $K$-means++ initialization achieves a $O(\log K)$-approximation on both the client and server side.

We also point out that in~\citet{dennis2021heterogeneity} the authors assume that the exact number of clusters from the global optimal clustering on client $l$ is known and equal to $K^{(l)}$, and propose the $K$-FED algorithm which performs well when $K^\prime=\max_{l\in[L]}K^{(l)}\leq\sqrt{K}$. The difference between $K^\prime$ and $K$ represents the data heterogeneity across different clients. With a slight modifications of the proof, we can also obtain $\mathbb{E}\left(\phi_f(\mathcal{X};\mathbf{C}_s)\right) < O(\log K\cdot \log K^\prime)\cdot \phi_c^*(\mathcal{X})$, when $K^{(l)}$ is known for each client beforehand, and perform $K^{(l)}$-means++ on client $l$ instead of $K$-means++ in Alg.~\ref{alg:fl_clustering}. For the extreme setting where each client safeguards data of one entire cluster (w.r.t. the global optimal clustering ($L=K, K^\prime=1$)), the performance guarantee for Alg.~\ref{alg:fl_clustering} becomes $\mathbb{E}\left(\phi_f(\mathcal{X};\mathbf{C}_s)\right) < O(1)\cdot \phi_c^*(\mathcal{X})$, which is the same as seeding each optimal cluster by a data point sampled uniformly at random from that cluster. From Lemma 3.1 of~\citet{vassilvitskii2006k} we see that we can indeed have $\mathbb{E}\left(\phi_f(\mathcal{X};\mathbf{C}_s)\right) =2\phi_c^*(\mathcal{X})$, where the approximation factor does not depend on $K$. This shows that data heterogeneity across different clients can benefit the entire FC framework introduced.
\end{remark*}

Next we show the proof for Theorem~\ref{thm:fl_performance_quantize}. Following the same idea as the one used in the proof of Lemma~\ref{lma:fl_performance_analog}, we arrive at
\begin{align}\label{eq:theorem_decompose}
    \mathbb{E}\left(\phi_f(\mathcal{X};\mathbf{C}_s)\right) &\leq 3\cdot\mathbb{E}\left[\sum_{l=1}^L\left(\|X^{(l)}-C^{(l)}\|_F^2+\|C^{(l)}-\widehat{C}^{(l)}\|_F^2+\|\widehat{C}^{(l)}-C_s^{(l)}\|_F^2\right)\right],
\end{align}
where $\widehat{C}^{(l)}$ is the quantized version of $C^{(l)}$. The first term can be upper bounded in the same way as in Lemma~\ref{lma:fl_performance_analog}. For the second term, the distortion introduced by quantizing one point is bounded by $\frac{\sqrt{d}\gamma}{2}$, if we choose the center of the quantization bin as the reconstruction point. Therefore, 
\begin{equation}\label{eq:quantization_distortion}
    \mathbb{E}\left(\sum_{l=1}^L \|C^{(l)}-\widehat{C}^{(l)}\|_F^2\right) \leq n\left(\frac{\sqrt{d}\gamma}{2}\right)^2=\frac{nd\gamma^2}{4}.
\end{equation}
The third term can be bounded as
\begin{align}\label{eq:decompose_2}
    \mathbb{E}\left(\sum_{l=1}^L\|\widehat{C}^{(l)}-C_s^{(l)}\|_F^2\right) \leq (8\ln K+16)\sum_{l=1}^L\mathbb{E}\left(\|\widehat{C}^{(l)}-C^{*,(l)}\|_F^2\right)\notag \\
    \mathbb{E}\left(\|\widehat{C}^{(l)}-C^{*,(l)}\|_F^2\right) \leq 3\cdot \mathbb{E}\left(\|\widehat{C}^{(l)}-C^{(l)}\|_F^2+\|C^{(l)}-X^{(l)}\|_F^2+\|X^{(l)}-C^{*,(l)}\|_F^2\right).
\end{align}
Replacing~(\ref{eq:quantization_distortion}) and~(\ref{eq:decompose_2}) into~(\ref{eq:theorem_decompose}) leads to
$$
    \mathbb{E}\left(\phi_f(\mathcal{X};\mathbf{C}_s)\right) < O(\log^2 K)\cdot \phi_c^*(\mathcal{X}) + O(nd\gamma^2\log K),
$$
which completes the proof. Similar as in Lemma~\ref{lma:fl_performance_analog}, we can have that for non-outlier points $\mathcal{X}_t$, $\mathbb{E}\left(\phi_f(\mathcal{X}_t;\mathbf{C}_s)\right) < O(\epsilon_2\log K)\cdot \phi_c^*(\mathcal{X}_t) + O(nd\gamma^2\log K)$.
\end{proof}

\section{Proof of Lemma~\ref{lma:mu_kpp}}
\label{app:mu_kpp}
\begin{proof}
Assume that the number of data points in $\mathcal{X}$ is $n$, the size of $\mathcal{X}_R$ is $R$, and the initial centroid set for $\mathcal{X}$ is $\mathbf{C}$. We use induction to prove that $\mathbf{C}^\prime$ returned by Alg.~\ref{alg:mu_kpp} is probabilistically equivalent to rerunning the $K$-means++ initialization on $\mathcal{X}^\prime=\mathcal{X}\backslash\mathcal{X}_R$.

The base case of induction amounts to investigating the removal process for $c_1$, the first point selected by $K$-means++. There are two possible scenarios: $c_1\in\mathcal{X}_R$ and $c_1\notin\mathcal{X}_R$. In the first case, we will rerun the initialization process over $\mathcal{X}^\prime$, which is equivalent to retraining the model. In the second case, since we know $c_1\notin \mathcal{X}_R$, the probability of choosing $c_1$ from $\mathcal{X}$ as the first centroid equals the conditional probability 
\begin{align*}
\frac{1}{n-R}&=\mathbb{P}(\text{choose }c_1\text{ from }\mathcal{X}\text{ as the first centroid}|c_1\notin \mathcal{X}_R)\\
&=\mathbb{P}(\text{choose }c_1\text{ from }\mathcal{X}^\prime\text{ as the first centroid}).
\end{align*}

Next suppose that $K>1$, $i=(\argmin_j c_j\in \mathcal{X}_R) - 1$. The centroids $\mathbf{C}_{i-1}^\prime=\{c_1^\prime=c_1,\ldots,c_{i-1}^\prime=c_{i-1}\}$ returned by Alg.~\ref{alg:mu_kpp} can be viewed probabilistically equivalent to the model obtained from rerunning the initialization process over $\mathcal{X}^\prime$ for the first $i-1$ rounds. Then we have
\begin{align*}
    \mathbb{P}(\text{choose }c_i\text{ from }\mathcal{X}\text{ as $i$-th centroid}|c_i\notin \mathcal{X}_R) &= \frac{\mathbb{P}(\text{choose }c_i\text{ from }\mathcal{X}\text{ as $i$-th centroid} \cap c_i\notin \mathcal{X}_R)}{\mathbb{P}(c_i\notin \mathcal{X}_R)} \\
    &\stackrel{(a)}{=} \frac{\mathbb{P}(\text{choose }c_i\text{ from }\mathcal{X}\text{ as $i$-th centroid})}{\mathbb{P}(c_i\notin \mathcal{X}_R)}\\
    &= \frac{d^2(c_i,\mathbf{C}_{i-1}^\prime) / \phi_c(\mathcal{X};\mathbf{C}_{i-1}^\prime)}{1-\sum_{x\in\mathcal{X}_R} d^2(x,\mathbf{C}_{i-1}^\prime) / \phi_c(\mathcal{X};\mathbf{C}_{i-1}^\prime)}\\
    &=\frac{d^2(c_i,\mathbf{C}_{i-1}^\prime) / \phi_c(\mathcal{X};\mathbf{C}_{i-1}^\prime)}{ \phi_c(\mathcal{X}^\prime;\mathbf{C}_{i-1}^\prime) / \phi_c(\mathcal{X};\mathbf{C}_{i-1}^\prime)}\\
    &= \frac{d^2(c_i,\mathbf{C}_{i-1}^\prime)}{\phi_c(\mathcal{X}^\prime;\mathbf{C}_{i-1}^\prime)}\\
    &= \mathbb{P}(\text{choose }c_i\text{ from }\mathcal{X}^\prime\text{ as $i$-th centroid}),
\end{align*}
where $(a)$ holds based on the definition of $i$, indicating that the $i$-th centroid is not in $\mathcal{X}_R$. Therefore, the centroid $c_i^\prime=c_i$ returned by Alg.~\ref{alg:mu_kpp} can be seen as if obtained from rerunning the initialization process over $\mathcal{X}^\prime$ in the $i$-th round. Again based on the definition of $i$, it is clear that for $j>i$, $c_j^\prime$ are the centroids chosen by the $K$-means++ procedure over $\mathcal{X}^\prime$. This proves our claim that $\mathbf{C}^\prime$ returned by Alg.~\ref{alg:mu_kpp} is probabilistic equivalent to the result obtained by rerunning the $K$-means++ initialization on $\mathcal{X}^\prime$.

Theorem 1.1 of~\citet{vassilvitskii2006k} then establishes that
\begin{equation}
    \mathbb{E}(\phi_c(\mathcal{X}^\prime;\mathbf{C}^\prime))\leq 8(\ln K+2)\phi^*_c(\mathcal{X}^\prime),
\end{equation}
which completes the proof.
\end{proof}

\section{Proof of Theorem~\ref{thm:expected_removal_time}}
\begin{proof}
We first analyze the probability of rerunning $K$-means++ initialization based on Alg.~\ref{alg:mu_kpp}. Assumptions~\ref{def:imbalance} and~\ref{def:outlier} can be used to derive an expression for the probability of $x_i\in\mathbf{C}$ (where $x_i$ is the point that needs to be unlearned), which also equals the probability of retraining.

\begin{lemma}\label{lma:retrain_prob}
Assume that the number of data points in $\mathcal{X}$ is $n$ and that the probability of the data set containing at least one outlier is upper bounded by $O\left({1}/{n}\right)$. Let $\mathbf{C}$ be the centroid set obtained by running $K$-means++ on $\mathcal{X}$. For an arbitrary removal set $\mathcal{X}_R\subseteq \mathcal{X}$ of size $R$, we have 
\begin{align*}
\text{for random removals: }& \mathbb{P}(\mathcal{X}_R\cap\mathbf{C}\neq \varnothing)<O\left({RK}/{n}\right); \\
\text{for adversarial removals: }& \mathbb{P}(\mathcal{X}_R\cap\mathbf{C}\neq \varnothing)<O\left({RK^2\epsilon_1\epsilon_2}/{n}\right).
\end{align*}
\end{lemma}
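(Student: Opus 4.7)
The plan is to handle the two removal models via different union-bound strategies, and in the adversarial case to bound per-point retraining probabilities using Assumptions~\ref{def:imbalance} and~\ref{def:outlier}.

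For \emph{random removal}, $\mathcal{X}_R$ is a uniformly random size-$R$ subset of $\mathcal{X}$ chosen independently of the centroid set $\mathbf{C}$ produced by $K$-means++. A direct union bound over the $K$ centroids gives $\mathbb{P}(\mathcal{X}_R\cap\mathbf{C}\neq\varnothing)\le\sum_{c\in\mathbf{C}}\mathbb{P}(c\in\mathcal{X}_R)=KR/n$, since $\mathbb{P}(c\in\mathcal{X}_R)=R/n$ for each fixed $c$, yielding the first claim.

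For \emph{adversarial removal} I first peel off the event that $\mathcal{X}$ contains an outlier: by hypothesis this event has probability at most $O(1/n)$ and contributes at most $O(1/n)$ to the target bound, which is absorbed into $O(RK^2\epsilon_1\epsilon_2/n)$ since $RK^2\epsilon_1\epsilon_2\ge 1$. On the complement, every point satisfies $\|x-c_k^*\|^2\le\epsilon_2\phi_c^*(\mathcal{C}_k^*)/|\mathcal{C}_k^*|$ by Assumption~\ref{def:outlier}. A union bound over the $R$ removal points then reduces the problem to upper-bounding $\mathbb{P}(x_i\in\mathbf{C})$ uniformly for any single non-outlier $x_i\in\mathcal{C}_k^*$. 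Writing $\mathbb{P}(x_i\in\mathbf{C})\le\sum_{j=1}^K\mathbb{P}(x_i=c_j)$, the $j=1$ term equals $1/n$ by uniform seeding, and for $j\ge 2$ I use the $K$-means++ identity $\mathbb{P}(x_i=c_j\mid\mathbf{C}_{j-1})=d^2(x_i,\mathbf{C}_{j-1})/\phi_c(\mathcal{X};\mathbf{C}_{j-1})$, bounding the numerator by a triangle-plus-$(a+b)^2\le 2a^2+2b^2$ step through $c_k^*$ to obtain $d^2(x_i,\mathbf{C}_{j-1})\le 2\epsilon_2\phi_c^*(\mathcal{C}_k^*)/|\mathcal{C}_k^*|+2d^2(c_k^*,\mathbf{C}_{j-1})$.

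The main obstacle is matching this with a denominator lower bound that absorbs the residual $d^2(c_k^*,\mathbf{C}_{j-1})$ term; I would resolve it by splitting on whether $d^2(c_k^*,\mathbf{C}_{j-1})$ exceeds the threshold $\tau:=4\epsilon_2\phi_c^*(\mathcal{C}_k^*)/|\mathcal{C}_k^*|$. In the far case ($\ge\tau$), every non-outlier $x\in\mathcal{C}_k^*$ satisfies $d(x,\mathbf{C}_{j-1})\ge d(c_k^*,\mathbf{C}_{j-1})/2$ by the reverse triangle inequality, so $\phi_c(\mathcal{X};\mathbf{C}_{j-1})\ge|\mathcal{C}_k^*|d^2(c_k^*,\mathbf{C}_{j-1})/4$, yielding $\mathbb{P}(x_i=c_j\mid\mathbf{C}_{j-1})=O(1/|\mathcal{C}_k^*|)$. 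In the near case ($<\tau$), the numerator is itself $O(\epsilon_2\phi_c^*(\mathcal{C}_k^*)/|\mathcal{C}_k^*|)$, and the monotonicity $\phi_c(\mathcal{X};\mathbf{C}_{j-1})\ge\phi_c^*(\mathcal{X})\ge\phi_c^*(\mathcal{C}_k^*)$ (adding centroids can only lower the cost) gives the bound $O(\epsilon_2/|\mathcal{C}_k^*|)$. In either case the estimate $\mathbb{P}(x_i=c_j\mid\mathbf{C}_{j-1})\le O(\epsilon_2/|\mathcal{C}_k^*|)\le O(K\epsilon_1\epsilon_2/n)$ holds uniformly in $\mathbf{C}_{j-1}$ via Assumption~\ref{def:imbalance}. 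Taking expectation, summing over $j\in[K]$ gives $\mathbb{P}(x_i\in\mathbf{C})=O(K^2\epsilon_1\epsilon_2/n)$, and the outer union bound over the $R$ adversarial removals combined with the absorbed outlier contribution delivers the claimed $O(RK^2\epsilon_1\epsilon_2/n)$.
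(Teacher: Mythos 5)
Your proof is correct, and while it shares the paper's overall skeleton—establish a per-step bound $\mathbb{P}(x_i=c_j\mid \mathbf{C}_{j-1})\le O(K\epsilon_1\epsilon_2/n)$ uniformly over $\mathbf{C}_{j-1}$, sum over the $K$ seeding steps, union-bound over the $R$ removals, and dispose of the outlier event via its $O(1/n)$ probability—the core estimate is carried out by a genuinely different mechanism. The paper uses no case split: it lower-bounds the denominator $\phi_c(\mathcal{X};\mathbf{C}_{j-1})$ by a $\tfrac15,\tfrac25,\tfrac25$ convex combination of three lower bounds ($\phi_c^*(\mathcal{C}_k^*)$, $\sum_{y\neq x_i}d^2(y,\mathbf{C}^*)$, and $\sum_{y\neq x_i}d^2(y,\mathbf{C}_{j-1})$), converts the latter two into $\sum_{y\in\mathcal{C}_k^*}\|c_y-c_k^*\|^2$ via $\|a-b\|^2\le 2\|a-c\|^2+2\|c-b\|^2$, and then combines the non-outlier condition with $\|x_i-c_k^*\|^2+\|c_y-c_k^*\|^2\ge\tfrac12 d^2(x_i,\mathbf{C}_{j-1})$ to tie the whole denominator directly to the numerator, yielding the uniform bound $10K\epsilon_1\epsilon_2/n$. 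You instead decompose the numerator through $c_k^*$ and split on whether $d^2(c_k^*,\mathbf{C}_{j-1})$ exceeds $\tau=4\epsilon_2\phi_c^*(\mathcal{C}_k^*)/|\mathcal{C}_k^*|$, using $\phi_c(\mathcal{X};\mathbf{C}_{j-1})\ge\sum_{x\in\mathcal{C}_k^*}d^2(x,\mathbf{C}_{j-1})\ge |\mathcal{C}_k^*|\,d^2(c_k^*,\mathbf{C}_{j-1})/4$ in the far case (valid because, on the no-outlier event, every $x\in\mathcal{C}_k^*$ lies within $\sqrt{\tau}/2$ of $c_k^*$) and $\phi_c(\mathcal{X};\mathbf{C}_{j-1})\ge\phi_c^*(\mathcal{X})\ge\phi_c^*(\mathcal{C}_k^*)$ in the near case; both branches give $O(\epsilon_2/|\mathcal{C}_k^*|)\le O(K\epsilon_1\epsilon_2/n)$ by Assumption~\ref{def:imbalance}, matching the paper's constant order. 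Your route is arguably more transparent, at the cost of the threshold bookkeeping, while the paper's single inequality chain is more compact. Two minor differences are immaterial: you peel off the event that the dataset contains any outlier once at the start (absorbed since $RK^2\epsilon_1\epsilon_2\ge 1$), whereas the paper conditions per removed point on whether it is an outlier; and your random-removal count (union over the $K$ centroids with $\mathbb{P}(c\in\mathcal{X}_R)=R/n$) is the same as the paper's union over the $R$ removed points with $\mathbb{P}(x_i\in\mathbf{C})=K/n$.
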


\begin{proof}
Since outliers can be arbitrarily far from all true cluster points based on definition, during initialization they may be sampled as centroids with very high probability. For simplicity of analysis, we thus assume that outliers are sampled as centroids with probability $1$ if they exist in the dataset, meaning that we will always need to rerun the $K$-means++ initialization when outliers exist in the complete dataset before any removals.

For random removals, where the point requested for unlearning, $x_i$, is drawn uniformly at random from $\mathcal{X}$, it is clear that $\mathbb{P}(x_i\in\mathbf{C})=\frac{K}{n}$, since $\mathbf{C}$ contains $K$ distinct data points in $\mathcal{X}$.

For adversarial removals, we need to analyze the probability of choosing $x_i$ as the $(k+1)$-th centroid, given that the first $k$ centroids have been determined and $x_i\notin\mathbf{C}_{k}=\{c_1,\dots,c_k\}$. For simplicity we first assume that there is no outlier in $\mathcal{X}$. Then we have
\begin{equation}\label{eq:app_1}
\mathbb{P}(\text{choose }x_i\text{ from }\mathcal{X}\text{ as the $(k+1)$-th centroid}|\mathbf{C}_{k})=\frac{d^2(x_i,\mathbf{C}_k)}{\sum_{y\neq x_i}d^2(y,\mathbf{C}_k)+d^2(x_i,\mathbf{C}_k)}
\end{equation}
For the denominator $\sum_{y\neq x_i}d^2(y,\mathbf{C}_k)+d^2(x_i,\mathbf{C}_k)$, the following three observations are in place
\begin{align*}
    \sum_{y\neq x_i}d^2(y,\mathbf{C}_k)+d^2(x_i,\mathbf{C}_k) &\geq \phi_c^*(\mathcal{X}) \geq \phi_c^*(\mathcal{C}_i^*), x_i\in\mathcal{C}_i^* \\
    \sum_{y\neq x_i}d^2(y,\mathbf{C}_k)+d^2(x_i,\mathbf{C}_k) &\geq \sum_{y\neq x_i}d^2(y,\mathbf{C}^*) \\
    \sum_{y\neq x_i}d^2(y,\mathbf{C}_k)+d^2(x_i,\mathbf{C}_k) &\geq \sum_{y\neq x_i}d^2(y,\mathbf{C}_k).
\end{align*}
Therefore, 
\begin{align}\label{eq:app_2}
    \sum_{y\neq x_i}d^2(y,\mathbf{C}_k)+d^2(x_i,\mathbf{C}_k) &\geq
    \frac{\phi_c^*(\mathcal{C}_i^*)}{5} + \frac{2}{5}\left(\sum_{y\neq x_i}d^2(y,\mathbf{C}^*)+d^2(y,\mathbf{C}_k)\right) \notag\\
    &\stackrel{(a)}{\geq} \frac{1}{5}\left(\phi_c^*(\mathcal{C}_i^*) + \sum_{y\neq x_i} \|c_y-c_y^*\|^2\right),
\end{align}
where $c_y, c_y^*$ are the closest centroid in $\mathbf{C}_k$ and $\mathbf{C}^*$ to $y$, respectively. Here, $(a)$ is a consequence of the fact that $\|a-b\|^2=\|a-c+c-b\|^2\leq 2(\|a-c\|^2+\|b-c\|^2)$. Since $x_i$ is not an outlier for $\mathcal{C}_i^*$ based on our assumption, we have 
$$
\phi_c^*(\mathcal{C}_i^*)\geq \frac{|\mathcal{C}_i^*|}{\epsilon_2}\|x_i-c_i^*\|^2 \geq \frac{n}{K\epsilon_1\epsilon_2}\|x_i-c_i^*\|^2.
$$
Consequently,
\begin{align}\label{eq:app_3}
    \phi_c^*(\mathcal{C}_i^*) + \sum_{y\neq x_i} \|c_y-c_y^*\|^2 &\geq \frac{|\mathcal{C}_i^*|}{\epsilon_2}\|x_i-c_i^*\|^2 + \sum_{y\in\mathcal{C}_i^*} \|c_y-c_y^*\|^2 \notag\\
    &= \frac{|\mathcal{C}_i^*|}{\epsilon_2}\|x_i-c_i^*\|^2 + \sum_{y\in\mathcal{C}_i^*} \|c_y-c_i^*\|^2.
\end{align}
For $\forall y\in\mathcal{C}_i^*$, it hold $\|x_i-c_i^*\|^2+\|c_y-c_i^*\|^2\geq \frac{1}{2}\|x_i-c_y\|^2\geq\frac{1}{2}d^2(x_i,\mathbf{C}_k)$. Thus,~(\ref{eq:app_3}) can be lower bounded by
\begin{equation}\label{eq:app_4}
    \frac{|\mathcal{C}_i^*|}{\epsilon_2}\|x_i-c_i^*\|^2 + \sum_{y\in\mathcal{C}_i^*} \|c_y-c_i^*\|^2 \geq \frac{|\mathcal{C}_i^*|}{2\epsilon_2}d^2(x_i,\mathbf{C}_k)\geq \frac{n}{2K\epsilon_1\epsilon_2}d^2(x_i,\mathbf{C}_k).
\end{equation}
Combining~(\ref{eq:app_4}) and~(\ref{eq:app_2}) we obtain
\begin{equation*}
    \sum_{y\neq x_i}d^2(y,\mathbf{C}_k)+d^2(x_i,\mathbf{C}_k) \geq \frac{n}{10K\epsilon_1\epsilon_2}d^2(x_i,\mathbf{C}_k).
\end{equation*}
Using this expression in~(\ref{eq:app_1}) results in
\begin{equation}
    \mathbb{P}(\text{choose }x_i\text{ from }\mathcal{X}\text{ as the $(k+1)$-th centroid}|\mathbf{C}_{k}) \leq \frac{10K\epsilon_1\epsilon_2}{n},
\end{equation}
which holds for $\forall k\in[K]$. Thus, the probability $\mathbb{P}(x_i\in\mathbf{C})$ can be computed as
\begin{align}
    \mathbb{P}(x_i\in\mathbf{C}) &= \sum_{k=0}^{K-1}\mathbb{P}(\text{choose }x_i\text{ from }\mathcal{X}\text{ as the $(k+1)$-th centroid}|\mathbf{C}_{k})\mathbb{P}(\mathbf{C}_k)\notag \\
    &\leq \sum_{k=0}^{K-1}\mathbb{P}(\text{choose }x_i\text{ from }\mathcal{X}\text{ as the $(k+1)$-th centroid}|\mathbf{C}_{k})\notag \\
    &\leq \frac{1}{n}+\frac{10K(K-1)\epsilon_1\epsilon_2}{n} < O\left(\frac{K^2\epsilon_1\epsilon_2}{n}\right).
\end{align}
Here, we assumed that $\mathbf{C}_0=\varnothing$.

For the case where outliers are present in the dataset, we have
\begin{align*}
    \mathbb{P}(x_i\in\mathbf{C}) &= \mathbb{P}(x_i\in\mathbf{C}|x_i\text{ is outlier})\mathbb{P}(x_i\text{ is outlier}) + \mathbb{P}(x_i\in\mathbf{C}|x_i\text{ is not outlier})\mathbb{P}(x_i\text{ is not outlier}) \\
    &\leq 1\cdot O\left(\frac{1}{n}\right) + O\left(\frac{K^2\epsilon_1\epsilon_2}{n}\right)\cdot 1 < O\left(\frac{K^2\epsilon_1\epsilon_2}{n}\right),
\end{align*}
which completes the proof for the adversarial removal scenario. Finally, by union bound we can have that for the removal set $\mathcal{X}_R$ of size $R$,
\begin{align*}
\text{random removals: }& \mathbb{P}(\mathcal{X}_R\cap\mathbf{C}\neq \varnothing)<O\left(\frac{RK}{n}\right); \\
\text{adversarial removals: }& \mathbb{P}(\mathcal{X}_R\cap\mathbf{C}\neq \varnothing)<O\left(\frac{RK^2\epsilon_1\epsilon_2}{n}\right).
\end{align*}
Also, the probability naturally satisfies that 
$$
\mathbb{P}(\mathcal{X}_R\cap\mathbf{C}\neq \varnothing) \leq 1.
$$
\end{proof}

Next we show the proof for Theorem~\ref{thm:expected_removal_time}. The expected removal time for random removals can be upper bounded by
\begin{align*}
    \mathbb{E}(\text{Removal time}) &= \mathbb{E}(\text{Removal time}|\text{new initialization needed})\mathbb{P}(\text{new initialization needed}) + \\
    &\mathbb{E}(\text{Removal time}|\text{new initialization not needed})\mathbb{P}(\text{new initialization not needed}) \\
    &\leq O(nKd+RK)\cdot O\left(\frac{RK}{n}\right) + O(RK)\cdot 1\\
    & < O(RK^2d).
\end{align*}
Following a similar argument, we can also show that the expected removal time for adversarial removals can be upper bounded by $O(RK^3\epsilon_1\epsilon_2d)$. And based on our Algorithm~\ref{alg:mu_kpp}, the unlearning complexity for both types of removal requests would be always upper bounded by the retraining complexity $O(nKd)$ as well, which completes the proof.
\end{proof}

\section{Comparison between Algorithm~\ref{alg:mu_kpp} and Quantized $K$-means}
In~\citet{ginart2019making}, quantized $K$-means were proposed to solve a similar problem of machine unlearning in the centralized setting. However, that approach substantially differs from Alg.~\ref{alg:mu_kpp}. First, the intuition behind quantized $K$-means is that the centroids are computed by taking an average, and the effect of a small number of points is negligible when there are enough terms left in the clusters after removal. Therefore, if we quantize all centroids after each Lloyd's iteration, the quantized centroids will not change with high probability when we remove a small number of points from the dataset. Meanwhile, the intuition behind Alg.~\ref{alg:mu_kpp} is as described in Lemma~\ref{lma:retrain_prob}. Second, the expected removal time complexity for quantized $K$-means equals $O\left({R^2K^3T^2d^{2.5}}/{\epsilon}\right)$, which is high since one needs to check if all quantized centroids remain unchanged after removal at each iteration, where $T$ denotes the maximum number of Lloyd's iteration before convergence and $\epsilon$ is some intrinsic parameter. In contrast, Alg.~\ref{alg:mu_kpp} only needs $O(RK^3\epsilon_1\epsilon_2d)$ even for adversarial removals. Also note that the described quantized $K$-means algorithm does not come with performance guarantees on removal time complexity unless it is randomly initialized.

\section{Quantization}\label{app:quant}
For uniform quantization, we set $\hat{y}=\gamma\cdot a(y)$, where $a(y)=\argmin_{j\in\mathbb{Z}}|y-\gamma j|, y\in\mathbb{R}$\footnote{We can also add random shifts during quantization as proposed in~\cite{ginart2019making} to make the data appear more uniformly distributed within the quantization bins.}. The parameter $\gamma>0$ determines the number of quantization bins in each dimension. Suppose all client data lie in the unit hypercube centered at the origin, and that if needed, pre-processing is performed to meet this requirement. Then the number of quantization bins in each dimension equals $B=\gamma^{-1}$, while the total number of quantization bins for $d$ dimensions is $B^d=\gamma^{-d}$.

In Section~\ref{sec:fl}, we remarked that one can generate $q_j$ points by choosing the center of the quantization bin as the representative point and endow it with a weight equal to $q_j$. Then, in line $7$, we can use the weighted $K$-means++ algorithm at the server to further reduce the computational complexity, since the effective problem size at the server reduces from $n$ to $KL$. However, in practice we find that when the computational power of the server is not the bottleneck in the FL system, generating data points uniformly at random within the quantization bins can often lead to improved clustering performance. Thus, this is the default approach for our subsequent numerical simulations.

\section{Simplified Federated $K$-means Clustering}\label{app:fl_nonsecure_clustering}
When privacy criterion like the one stated in Section~\ref{sec:prelim} is not enforced, and as done in the framework of ~\citet{dennis2021heterogeneity}, one can skip line 3-6 in Alg.~\ref{alg:fl_clustering} and send the centroid set $\mathbf{C}^{(l)}$ obtained by client $l$ along with the cluster sizes $(|\mathcal{C}_1^{(l)}|, \ldots, |\mathcal{C}_{K}^{(l)}|)$ directly to the server. Then, one can run the weighted $K$-means++ algorithm at the server on the aggregated centroid set to obtain $\mathbf{C}_s$. The pseudocode for this simplified case is shown in Alg.~\ref{alg:fl_nonsecure_clustering}. It follows a similar idea as the divide-and-conquer schemes of~\cite{guha2003clustering,ailon2009streaming}, developed for distributed clustering.
\begin{algorithm}
   \caption{Simplified Federated $K$-means Clustering}
   \label{alg:fl_nonsecure_clustering}
   \begin{algorithmic}[1]
    \STATE \textbf{input:} Dataset $\mathcal{X}$ distributed on $L$ clients ($\mathcal{X}^{(1)},\ldots,\mathcal{X}^{(L)}$).
    \STATE Run $K$-means++ initialization on each client $l$ in parallel, obtain the initial centroid sets $\mathbf{C}^{(l)}$, and record the corresponding cluster sizes $\left(|\mathcal{C}_1^{(l)}|, \ldots, |\mathcal{C}_{K}^{(l)}|\right),\;\forall l\in[L]$.
    \STATE Send $\left(c_1^{(l)}, \ldots, c_{K}^{(l)}\right)$ along with the corresponding cluster sizes $\left(|\mathcal{C}_1^{(l)}|, \ldots, |\mathcal{C}_{K}^{(l)}|\right)$ to the server, $\forall l\in[L]$.
    \STATE Concatenate $\left(c_1^{(l)}, \ldots, c_{K}^{(l)}\right)$ as rows of $X_s$ and set $\left(|\mathcal{C}_1^{(l)}|, \ldots, |\mathcal{C}_{K}^{(l)}|\right)$ as the weights for the corresponding rows, $\forall l\in[L]$.
    \STATE Run full weighted $K$-means++ clustering at server with $X_s$ to obtain the centroid set at server $\mathbf{C}_s$.
    \RETURN Each client retains their own centroid set $\mathbf{C}^{(l)}$ while the server retains $X_s$ and $\mathbf{C}_s$.
  \end{algorithmic}
\end{algorithm}

In line 5 of Alg.~\ref{alg:fl_nonsecure_clustering}, weighted $K$-means++ would assign weights to data points when computing the sampling probability during the initialization procedure and when computing the average of clusters during the Lloyd's iterations. Since the weights we are considering here are always positive integers, a weighted data point can also be viewed as there exist identical data points in the dataset with multiplicity equals to the weight.

\section{The Uniqueness of the Vector $q$ given $\{S_i\}_{i\in[2KL]}$}\label{sec:uniqueness}
To demonstrate that the messages generated by Alg.~\ref{alg:fl_secure} can be uniquely decoded, we prove that there exists a unique
$q$ that produces the aggregated values $\{S_i\}_{i\in[2KL]}$ at the server. The proof is by contradiction. Assume that there exist two different vectors $q$ and $q'$ that result in the same $\{S_i\}_{i\in[2KL]}$. In this case, we have the following set of linear equations $\sum_{j:q_j\ne 0}q_j \cdot j^{i-1}-\sum_{j:q'_j\ne 0}q'_j \cdot j^{i-1}=0,$ $i\in[2KL]$. Given that $\{q_j:q_j\ne 0\}$ and $\{q'_j:q'_j\ne 0\}$ represent at most $2KL$ unknowns and $j^{i-1}$ coefficients, the linear equations can be described using a square Vandermonde matrix for the coefficients, with the columns of the generated by the indices of the nonzero entries in $q$. This leads to a contradiction since a square Vandermonde matrix with different column generators is invertible, which we show below. Hence, the aggregated values $\{S_i\}$ must be different for different $q$. Similarly, the sums $\sum_{j:q^{(l)}_j\ne 0}q^{(l)}_j \cdot j^{i-1}$ are distinct for different choices of vectors $q^{(l)}$, $i\in[2KL]$, $l\in[L]$.

If two vectors $q$ and $q'$ result in the same $\{S_i\}_{i\in[2KL]}$, then $\sum_{j:q_j\ne 0}q_j\cdot j^{i-1}-\sum_{j:q'_j\ne 0}q'_j\cdot j^{i-1}=0,$ for all $i\in[2KL]$. Let $\{i_1,\ldots,i_u\}=(\{j:q_j\ne 0\}\cup\{ j:q'_j= 0\})$ be the set of integers such that at least one of $q_{i_m}$ and $q'_{i_m}$ is nonzero for $m\in[u]$. Note that $u\le 2KL$.
Rewrite this equation as
\begin{align}\label{eq:vandermonde}
    \begin{bmatrix}
1 & \cdots & 1\\
i_1 & \cdots & i_u\\
\vdots& \vdots &\vdots \\
i^{2KL-1}_1 & \cdots & i^{2KL-1}_u
\end{bmatrix}\begin{bmatrix}
q_{i_1}-q'_{i_1}\\
\vdots\\
q_{i_u}-q'_{i_u}
\end{bmatrix}=\boldsymbol{0}.
\end{align}
Since $u\le 2KL$, we take the first $u$ equations in (\ref{eq:vandermonde}) and rewrite them as
\begin{align*}
 Bv=\boldsymbol{0},
\end{align*}
where 
\begin{align*}
    B=\begin{bmatrix}
1 & \cdots & 1\\
i_1 & \cdots & i_u\\
\vdots& \vdots &\vdots \\
i^{2KL-1}_1 & \cdots & i^{2KL-1}_u
\end{bmatrix}
\end{align*}
is a square Vandermonde matrix and 
\begin{align*}
    v=\begin{bmatrix}
q_{i_1}-q'_{i_1}\\
\vdots\\
q_{i_u}-q'_{i_u}
\end{bmatrix}
\end{align*}
is a nonzero vector since $q\ne q'$. It is known that the determinant of a square Vandermonde matrix $B$ is given by $\prod_{m_1<m_2,m_1,m_2\in[u]}(i_{m_2}-i_{m_1})$, which in our case is nonzero since all the $i_1,\ldots,i_u$ are different. Therefore, $B$ is invertible and does not admit a non-zero solution, which contradicts the equation $Bv=\boldsymbol{0}$.

\section{A Deterministic Low-Complexity Algorithm for SCMA at the Server}

In the SCMA scheme we described in Alg. \ref{alg:fl_clustering}, the goal of the server is to reconstruct the vector $q$, given values $S_i=\sum_{j:q_j\ne 0}q_j\cdot j^{i-1} \text{ mod } p$ for $i\in[2KL]$. To this end, we first use the  Berlekamp-Massey algorithm to compute the polynomial $g(x)=\prod_{j:q_j\ne 0}(1-j\cdot x)$. Then, we factorize $g(x)$ over the finite field $\mathbb{F}_p$ using the algorithm described in~\cite{kedlaya2011fast}. The complexity $O((KL)^{1.5}\log p+ KL\log^2 p)$ referred to in Section~\ref{sec:fl_complexity} corresponds to the average complexity (finding a deterministic algorithm that factorizes a polynomial over finite fields with $poly(\log p)$ worst-case complexity is an open problem). The complexity $\max\{O(K^2L^2), O((KL)^{1.5}\log p+ KL\log^2 p)\}$ referred to in Appendix~\ref{app:complexity} for the SCMA scheme represents an average complexity. 

We show next that the SCMA scheme has small worst-case complexity under a deterministic decoding algorithm at the server as well. To this end, we replace the integer $p$ in Alg. \ref{alg:fl_secure} with a large number $p'\ge \max\{KLB^{2dKL},n\}+1$ such that $p'$ is larger than the largest possible $S_i$ and there is no overflow when applying the modulo $p'$ operation on $S_i$. It is known (Bertrand's postulate) that there exists a prime number between any integer $n>3$ and $2n-2$, and hence there must be a prime number lower-bounded by $\max\{KLB^{2dKL},n\}+1$ and twice the lower bound $2(\max\{KLB^{2dKL},n\}+1)$. However, since searching for a prime number of this size can be computationally intractable, we remove the requirement that $p'$ is prime. Correspondingly, $\mathbb{F}_{p'}$ is not necessarily a finite field. Then, instead of sending $S^{(l)}_i=(\sum_{j:q^{(l)}_j\ne 0}q^{(l)}_j\cdot j^{i-1}+z^{(l)}_i) \text{ mod } p$, client $l$, $l\in[L],$ will send $S^{(l)}_i=(\sum_{j:q^{(l)}_j\ne 0}q^{(l)}_j\cdot j^{i-1}+z^{(l)}_i) \text{ mod } p'$ to the server, $i\in[2KL]$, where random keys $z^{(l)}_i$ are independently and uniformly distributed over $\{0,\ldots,p'-1\}$ and hidden from the server. After obtaining $S_i$, $i\in[2KL]$, the server can continue performing operations over the field of reals since there is no overflow in computing $S_i\mod p'$. We note that though $p'$ is exponentially large, the computation of $S^{(l)}_i$ and $S_i$, $l\in[L]$ and $i\in[2KL]$ is still manageable, and achieved by computing and storing $S^{(l)}_i$ and $S_i$ using $O(KL)$ floating point numbers, instead of computing and storing $S^{(l)}_i$ in a single floating point number. Note that $j^{i}$ can be computed using $O(i)$ floating point numbers with complexity almost linear in $i$ (i.e., $O(i\log^c i)$ for some constant $c$).

We now present a low complexity secure aggregation algorithm at the server. After reconstructing $S_i$, we have $S_i=\sum_{j:q_j\ne 0}q_j\cdot j^{i-1}$. The server switches to computations over the real field. First, it uses the Berlekamp-Massey algorithm to find the polynomial $g(x)=\prod_{j:q_j\ne 0}(1-j\cdot x)$ (the algorithm was originally proposed for decoding of BCH codes over finite fields, but it applies to arbitrary fields). Let $m$ be the degree of $g(x)$. Then $h(x)=x^mg(1/x)=\prod_{j:q_j\ne 0}(x-j)$. The goal is to factorize $h(x)$ over the field of reals, where the roots are known to be integers in $[B^d]$ and the multiplicity of each root is one. 

If the degree of $h(x)$ is odd, then $h(0)<0$ and $h(B^d)>0$. Then we can use bisection search to find a root of $h(x)$, which requires $O(\log B^d)$ polynomial evaluations of $h(x)$, and thus $O(MK\log B^d)$ multiplication and addition operations of integers of size at most $\log p'$. After finding one root $j$, we can divide $h(x)$ by $x-j$ and start the next root-finding iteration.

If the degree of $h(x)$ is even, then the degree of $h'(x)$ is odd, and the roots of $h'(x)$ are different and confined to $[B^d]$. We use bisection search to find a root $j'$ of $h'(x)$. If $h(j')<0$, then we use bisection search on $[0,j']=\{0,1,\ldots,j'\}$ to find a root of $h(x)$ and start a new iteration as described above when the degree of $h(x)$ is odd. If $h(j')>0$, then $h'(j'-1)>0$ and $h'(0)<0$. We use bisection search to find another root of $h'(x)$ in $[j'-1]$. Note that for every two roots $j'_1$ and $j'_2$ ($j'_1<j'_2$) of $h'(x)$ satisfying $h(j'_1)>0$ and $h(j'_2)>0$ we can always find another root $j'_3$ of $h'(x)$ in $[j'_1+1,j'_2-1]$. We keep iterating the search for every two such roots $j'_1,j'_2$ until we find a list of roots $r_1,\ldots,r_{2R+1}$ of $h'(x)$ such that $h(r_{i})<0$ for odd $i$ in $[2R+1]$ and $h(r_i)>0$ for even $i\in[2R+1]$. Then we can run bisection search on the sets $[0,r_1], [r_1,r_2],\ldots,[r_{2R},r_{2R+1}],[r_{2R+1},B^d]$, to find $2R+2$ roots of $h(x)$. Note that during the iteration we need $2R+1$ bisection search iterations to find the roots $r_1,\ldots,r_{2R+1}$ for $h'(x)$ and $2R+2$ bisection search iterations to find $2R+2$ roots for $h(x)$. 

The total computations complexity is hence at most $O(MK\log B^d)$ evaluations of polynomials with degree at most $O(MK)$ and at most $O(MK)$ polynomial divisions, which requires at most $O((MK)^2\log B^d)$ multiplications and additions for integers of size at most $\log p'$. This results in an overall complexity of $O((MK)^3d^2\log^c (MK)\log B),$ for some constant $c<2$. 


\section{Difference between the Assignment Matrices $C$ and $C_s$}\label{app:phi_difference}
One example that explains the difference between these two assignment matrices is as follows. Suppose the global data sets and centroid sets are the same for the centralized and FC settings, i.e.,  
$$
X=\begin{bmatrix} X^{(1)}\\
\cdots \\
X^{(L)}
\end{bmatrix},\; \mathbf{C}=\mathbf{C}_s=\{c_1,\dots,c_K\}.
$$
Suppose that for $x_1$, which is the first row of $X$, we have 
$$
d(x_1,c_1) < d(x_1,c_j),\; \forall j\in[K], j\neq 1.
$$
Then, the first row of $C$ equals $c_1$. However, if $x_1$ resides on the memory of client $l$ and belongs to the local cluster $\mathcal{C}_i^{(l)}$, and the recorded local centroid $c_i^{(l)}$ satisfies
$$
d\left(c_i^{(l)},c_2\right) < d\left(c_i^{(l)},c_j\right),\; \forall j\in[K], j\neq 2,
$$
then the first row of $C_s$ is $c_2$, even if $d(x_1,c_1) < d(x_1,c_2)$. Here $C_s$ is the row concatenation of the matrices $C_s^{(l)}$ on client $l$. This example shows that the assignment matrices $C$ and $C_s$ are different, which also implies that $\phi_f$ and $\phi_c$ are different.

\section{Experimental Setup and Additional Results}\label{app:exp}
\subsection{Datasets}
In what follows, we describe the datasets used in our numerical experiments. Note that we preprocessed all datasets such that the absolute value of each element in the data matrix is smaller than $1$. Each dataset has an intrinsic parameter $K$ for the number of optimal clusters, and these are used in the centralized $K$-means++ algorithm to compute the approximation of the optimal objective value. We use $\phi_c^*(X)$ in subsequent derivation to denote the objective value returned by the $K$-means++ algorithm. Besides $K$, we set an additional parameter $K^\prime\sim\sqrt{K}$ for each client data so that the number of true clusters at the client level is not larger than $K^\prime$. This non-i.i.d. data distribution across clients is discussed in~\citet{dennis2021heterogeneity}. For small datasets (e.g., TCGA, TMI), we consider the number of clients $L$ as $10$, and set $L=100$ for all other datasets.


\textbf{Celltype} [$n=12009,d=10,K=4$]~\citep{han2018mapping,gardner2014measuring} comprises single cell RNA sequences belonging to a mixture of four cell types: fibroblasts, microglial cells, endothelial cells and mesenchymal stem cells. The data, retrieved
from the Mouse Cell Atlas, consists of $12009$ data points and each sample has $10$ feature dimensions, reduced from an original dimension of $23,433$ using Principal Component Analysis (PCA). The sizes of the four clusters\footnote{The clusters are obtained by running centralized $K$-means++ clustering multiple times and selecting the one inducing the lowest objective value.} are $6506, 2328, 2201, 974$.

\textbf{Postures} [$n=74975,d=15,K=5$]~\citep{gardner2014measuring,gardner20143d} comprises images obtained via a motion capture system and a glove for $12$ different users performing five hand postures -- fist, pointing with one finger,
pointing with two fingers, stop (hand flat), and grab (fingers curled). For establishing a rotation and translation invariant local coordinate system, a rigid unlabeled pattern on the back of the glove was utilized. There are a total of $74975$ samples in the dataset and the data dimension is $15$. The sizes of the given clusters are $19772, 17340, 15141, 12225, 10497$.

\textbf{Covtype} [$n=15120,d=52,K=7$]~\citep{blackard1999comparative} comprises digital spatial data for seven forest cover types obtained from the US Forest Service (USFS) and the US Geological Survey (USGS). There are $52$ cartographic variables including slope, elevation, and aspect. The dataset has $15120$ samples. The sizes of the seven clusters are $3742, 3105, 2873, 2307, 1482, 886, 725$.

\textbf{Gaussian} [$n=30000,d=10,K=10$] comprises ten clusters, each generated from a $10$-variate Gaussian distribution centered at uniformly at random chosen locations in the unit hypercube. From each cluster, $3000$ samples are taken, for a total of $30000$ samples. Each Gaussian cluster is spherical with variance $0.5$.

\textbf{FEMNIST} [$n=36725,d=784,K=62$]~\citep{caldas2018leaf} is a popular FL benchmark dataset comprising images of digits (0-9) and letters from the English alphabet (both upper and lower cases) from over $3500$ users. It dataset is essentially built from the Extended MNIST repository~\citep{cohen2017emnist} by partitioning it on the basis of the writer of the digit/character. We extract data corresponding to $100$ different clients, each of which contributed at least $350$ data points. Each image has dimension $784$. The size of the largest cluster is $1234$, and that of the smallest cluster is $282$.   

\textbf{TCGA} [$n=1904,d=57,K=4$] methylation consists of methylation microarray data for $1904$ samples from The Cancer Genome Atlas (TCGA)~\citep{hutter2018cancer} corresponding to four different cancer types: Low Grade Glioma (LGG), Lung Adenocarcinoma (LUAD), Lung Squamous Cell Carcinoma (LUSC) and Stomach Adenocarcinoma (STAD). The observed features correspond to a subset of $\beta$ values, representing the coverage of the methylated sites, at $57$ locations on the promoters of $11$ different genes (ATM, BRCA1, CASP8, CDH1, IGF2, KRAS, MGMT, MLH1, PTEN, SFRP5 and TP53). This subset of genes was chosen for its relevance in carcinogenesis. The sizes of the four clusters are $735, 503, 390, 276$.

\textbf{TMI} [$n=1126,d=984,K=4$] contains samples from human gut microbiomes. We retrieved $1126$ human gut microbiome samples from the NIH Human Gut Microbiome~\citep{peterson2009nih}. Each data point is of dimension $983$, capturing the frequency (concentration) of  identified bacterial species or genera in the sample. The dataset can be roughly divided into four classes based on gender and age. The sizes of the four clusters are $934, 125, 46, 21$.

\subsection{Baseline Setups.} We use the publicly available implementation of K-FED and DC-KM as our baseline methods. For DC-KM, we set the height of the computation tree to $2$, and observe that the leaves represent the clients. Since K-FED does not originally support data removal, has high computational complexity, and its clustering performance is not comparable with that of DC-KM (see Tab.~\ref{tab:performance}), we thus only compare the unlearning performance of MUFC with DC-KM. During training, the clustering parameter $K$ is set to be the same in both clients and server for all methods, no matter how the data was distributed across the clients. Experiments on all datasets except FEMNIST were repeated $5$ times to obtain the mean and standard deviations, and experiments on FEMNIST were repeated $3$ times due to the high complexity of training. Note that we used the same number of repeated experiments as in~\citet{ginart2019making}.

\subsection{Enabling Complete Client Training for MUFC}
Note that both K-FED and DC-KM allow clients to perform full $K$-means++ clustering to improve the clustering performance at the server. Thus it is reasonable to enable complete client training for MUFC as well to compare the clustering performance on the full datasets. Although in this case we need to retrain affected clients and the server for MUFC upon each removal request, leading to a similar unlearning complexity as DC-KM, the clustering performance of MUFC is consistently better than that of the other two baseline approaches (see Tab.~\ref{tab:app_exp}). This is due to the fact that we utilize information about the aggregated weights of client centroids.

\begin{table}[ht]
\setlength{\tabcolsep}{3pt}
\centering
\scriptsize
\caption{Clustering performance of different FC algorithms compared to centralized $K$-means++ clustering.}
\label{tab:app_exp}
\begin{tabular}{@{}cc|c|c|c|c|c|c|c@{}}
\toprule
                                      & & TMI & Celltype & Gaussian & TCGA & Postures & FEMNIST & Covtype         \\ \midrule
\multirow{3}{*}{Loss ratio}  & \multicolumn{1}{c|}{{MUFC}} & {$\mathbf{1.05\pm 0.01}$} & {$\mathbf{1.03\pm 0.00}$} & {$\mathbf{1.02\pm 0.00}$} & {$\mathbf{1.02\pm 0.01}$} & {$\mathbf{1.02\pm 0.00}$} & {$\mathbf{1.12\pm 0.00}$} & {$\mathbf{1.02\pm 0.00}$} \\
                                      & \multicolumn{1}{c|}{{K-FED}}  & {$1.84\pm 0.07$} & {$1.72\pm 0.24$}  & {$1.25\pm 0.01$}  & {$1.56\pm 0.11$}  & {$1.13\pm 0.01$}  & {$1.21\pm 0.00$}  & {$1.60\pm0.01$} \\
                                      & \multicolumn{1}{c|}{{DC-KM}} & {$1.54\pm 0.13$}  & {$1.46\pm 0.01$} & {$\mathbf{1.02\pm 0.00}$} & {$1.15\pm 0.02$} & {$1.03\pm 0.00$} & {$1.18\pm 0.00$} & {$1.03\pm 0.02$} \\ \bottomrule
\end{tabular}
\end{table}

\subsection{Loss Ratio and Unlearning Efficiency}
In Fig.~\ref{fig:app_exp} we plot results pertaining to the change of loss ratio after each removal request and the accumulated removal time when the removal requests are adversarial. The conclusion is consistent with the results in Section~\ref{sec:exp}.

\begin{figure}[htb]
    \centering
    \includegraphics[width=\linewidth]{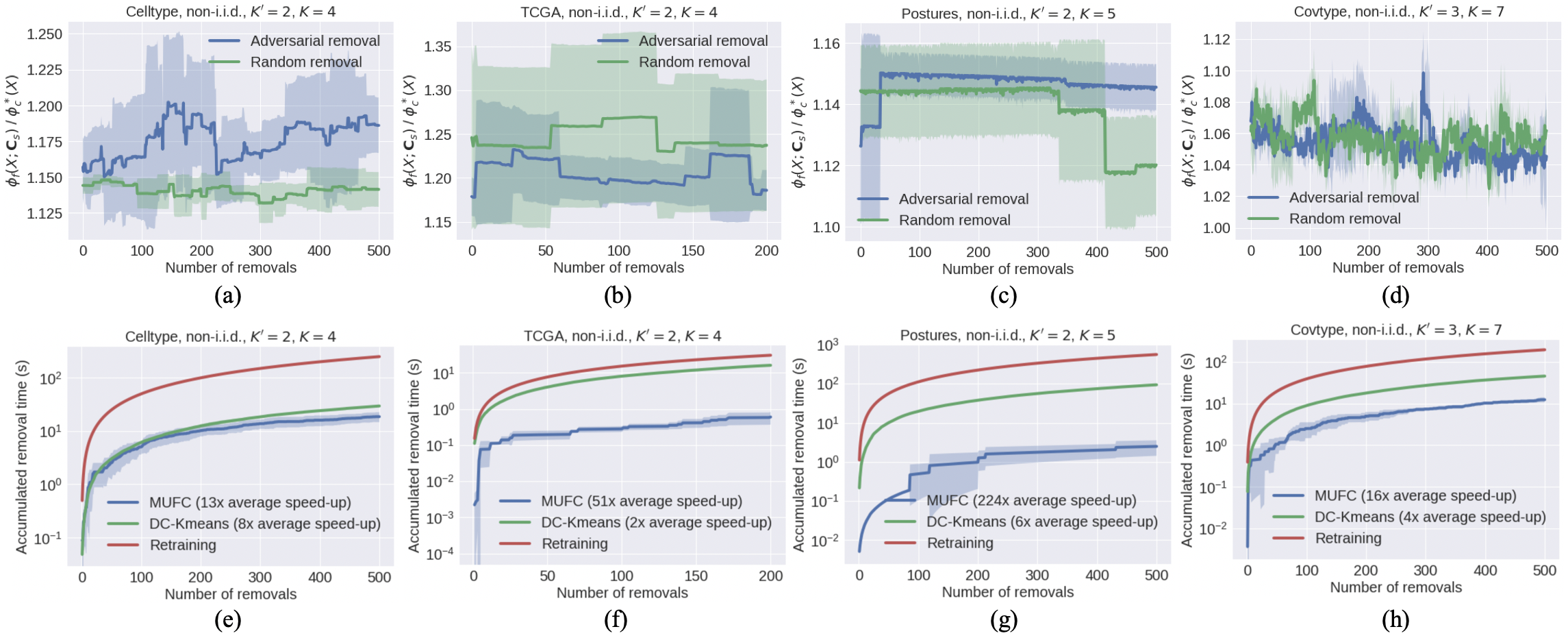}
    \caption{The shaded areas represent the standard deviation of results from different trails for all subplots. (a)-(d) The change of loss ratio ${\phi_f(\mathcal{X};\mathbf{C}_s)}/{\phi_c^*(X)}$ after each round of unlearning procedure. (e)-(h) The accumulated removal time for adversarial removals.}
    \label{fig:app_exp}
\end{figure}

\subsection{Batch Removal}
In Fig.~\ref{fig:app_batch} we plot the results pertaining to removing multiple points within one removal request (batch removal). Since in this case the affected client is more likely to rerun the $K$-means++ initialization for each request, it is expected that the performance (i.e., accumulated removal time) of our algorithm would behave more similar to retraining when we remove more points within one removal request, compared to the case in Fig.~\ref{fig:main_exp} where we only remove one point within one removal request.

\begin{figure}[htb]
    \centering
    \includegraphics[width=\linewidth]{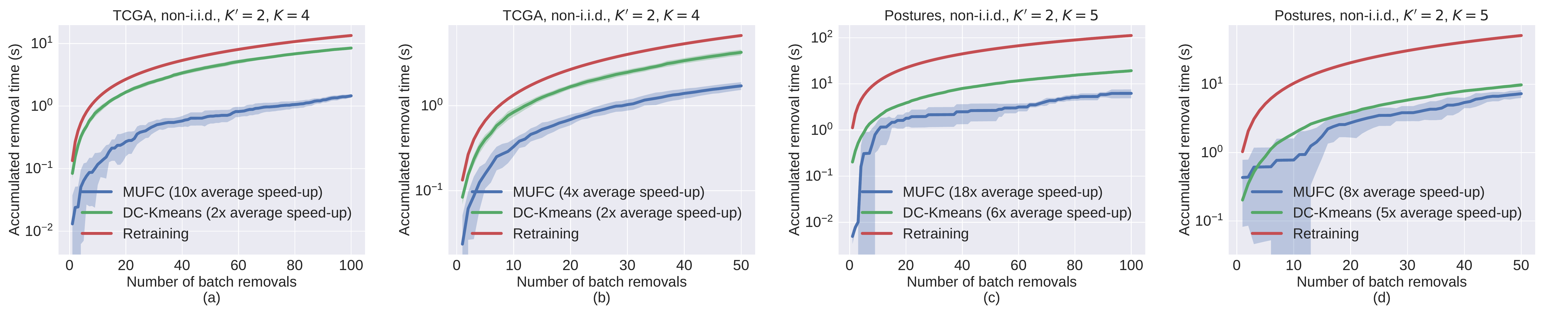}
    \caption{The shaded areas represent the standard deviation of results from different trails for all subplots. (a), (c) Remove $10$ points within one batch removal request. (b), (d) Remove $30$ points within one batch removal request.}
    \label{fig:app_batch}
\end{figure}

\end{document}